\newcommand{\bS}{\bm{S}}
\newcommand{\norm}[1]{\left\|#1\right\|}
\renewcommand{\cite}{\citep}
\newcommand{\D}{\mathcal{D}}
\newcommand{\set}[1]{\left\{#1\right\}}
\newcommand{\Si}{S^{(i)}}
\newcommand{\Sj}{S^{(j)}}
\newcommand{\Sjj}{S^{'(j)}}
\newcommand{\wi}{w^{(i)}}
\newcommand{\wj}{w^{(j)}}
\newcommand{\hinoise}{\tilde{h}^{(i)}}
\newcommand{\hjnoise}{\tilde{h}^{(j)}}
\newcommand{\Htilde}{\tilde{H}}
\newcommand{\wglobal}{w^{\text{global}}}
\newcommand{\wgnoise}{\tilde{w}^{\text{global}}}
\newcommand{\winoise}{\tilde{w}^{(i)}}
\newcommand{\wjnoise}{\tilde{w}^{(j)}}
\newcommand{\fedavg}{\texttt{FedAvg}\xspace}
\newcommand{\fedsync}{\texttt{PriFedSync}\xspace}
\newcommand{\sample}{\texttt{Sample}_p}
\newcommand{\bSj}{\bS^{'j}}
\newcommand{\zero}{\bm{0}}
\newcommand{\Id}{\text{Id}}
\newcommand{\dpsgd}{\texttt{DP-SGD}\xspace}
\title{Federated $f$-Differential Privacy}
\author{Qinqing Zheng\thanks{Department of Statistics. Email: {\tt zhengqinqing@gmail.com}.}
	\and Shuxiao Chen\thanks{Department of Statistics. Email: {\tt shuxiaoc@wharton.upenn.edu}. }
	\and Qi Long\thanks{Department of Biostatistics, Epidemiology and Informatics. Email: {\tt qlong@pennmedicine.upenn.edu}.}
	\and Weijie J.~Su\thanks{Department of Statistics. Email: {\tt suw@wharton.upenn.edu}.}
}
\date{}
\begin{document}
\maketitle
\begin{center}
\vskip-23pt
University of Pennsylvania\\
\vskip10pt
February 18, 2021
\end{center}

\begin{abstract}
    Federated learning (FL) is a training paradigm where the clients collaboratively learn
models by repeatedly sharing information without compromising much on the privacy of
their local sensitive data. 
%
%
In this paper, we introduce
\emph{federated $f$-differential privacy}, a new notion specifically tailored to the federated
setting, based on the framework of Gaussian differential privacy. Federated $f$-differential privacy
operates on \emph{record level}: it provides the privacy guarantee on each individual
record of one client's data against adversaries.  We then propose a generic private
federated learning framework \fedsync that accommodates a large family of
state-of-the-art FL algorithms, which provably achieves {federated $f$-differential privacy}.
Finally, we empirically demonstrate the trade-off between privacy guarantee and
prediction performance for models trained by \fedsync in computer vision tasks.

\end{abstract}

\section{Introduction}
\emph{Federated learning} \cite{mcmahan2016communication} is an emerging paradigm that enables
multiple clients to collaboratively learn prediction models without explicitly sharing
data.  Unlike traditional distributed training approaches
that upload all the data to central servers, federated learning performs on-device training and
only some summaries of local data or local models are exchanged among clients. Typically,
the clients upload their local models to the server and share the global averaging in a repeated manner.
This offers plausible solutions to address the critical data privacy issue:
sensitive information about individuals such as typing history, shopping transactions,
geographical locations, medical records, would stay localized. 

Nonetheless, a malicious client who participates in the federated learning
might still be able to learn information about the other clients' data
through the shared model's weights. This is because
it is possible for an adversary to learn about or even identify
certain individuals by simply tweaking the input datasets and probing the output of the algorithm \cite{fredrikson2015model, shokri2017membership}.
This gives rise to a pressing call for privacy-preserving federated learning algorithms.
Accordingly, we urgently need a rigorous and principled framework to enhance data privacy,
and to quantitatively answer the important questions:
\begin{center}
{\em Can another client identify the presence or absence of any individual record in my data in federated learning? Worse, what if all the other clients ally each other to attack my data?}
\end{center}
A number of works have tried to answer \textit{similar} questions from different perspectives
\cite{mcmahan2017learning, geyer2017differentially, li2019privatemeta, singh2020differentially},
and numerous privacy notions and associated approaches are proposed to address \textit{related} problems,
yet \textit{none} of them have answered these questions fully and directly.
To the best of our knowledge, all existing works plainly generalize the classical differential privacy
definition to the federated setting: an adversary can remove one client's whole dataset, and this type of attack would not incur massive changes to the output of the algorithm. The resulting privacy guarantee executes at the user level: whether a client has participated in the training
can not be inferred by adversaries, and the client's whole dataset is private.

While the user level privacy has important applications in federated learning,
it is complementary and equally important to consider weaker privacy notions at the
record level.
First, privacy is generally at odds with performance. A user level privacy guarantee is usually too strong and one often seeks a weaker notion that protects privacy from more practical attacks \cite{li2019privatemeta}.
More importantly,
consider the case whee multiple hospitals in different countries would like to collaboratively
learn prediction models for COVID-19. In this example, whether a hospital participates
in this collaboration is not a sensitive information at all,
and what really needs to be protected is the privacy of each patient. This is
a regime that a record level privacy notion shines.

In this work, we introduce a fine-grained privacy notion, called \emph{weak federated $f$-differential privacy},
that protects \emph{each individual record} of one client's data. We work
on the attack model that an adversary can manipulate one single record of the client's
dataset and provide privacy guarantees for this case.  We propose a unified private
federated learning framework \fedsync where a large family of federated learning
algorithms kick in.  Besides, we give an extended privacy notion, called \emph{strong federated $f$-differential privacy}, to address the case where multiple malicious clients jointly
attack a client, which has not been considered in any previous work.  Our major
contributions are as follows.

\begin{enumerate}
    \item We introduce two privacy notions, \emph{weak federated $f$-differential privacy} and
        \emph{strong federated $f$-differential privacy}, that describe the privacy guarantee against
        an individual adversary and against a group of adversaries, respectively. Both
        notions are of the finest resolution in the sense that they protect individual
        records of one client's data. The privacy definition that we rely on is $f$-differential privacy, in particular its sub-family of \emph{Gaussian differential privacy} (GDP)~\citep{dong2019gaussian}.

    \item We propose a generic federated learning framework \fedsync that contains the
        state-of-the-art federated learning algorithms.  The framework does not assume a
        trusted central aggregator.  It can accommodate both personalized or non-personalized
        approaches. We exploit the composition theorem of GDP to analyze the privacy
        guarantee of \fedsync and prove its asymptotic convergence.

    \item We conduct numerical experiments to illustrate our privacy notions and compare
        the performance of private models with non-private counterparts. When the data
        is heterogeneous across clients, our personalized approach demonstrates
        significant improvement over the global model. We also demonstrate the
        trade-offs between privacy and accuracy, and privacy and computation, through
        our experiments.
\end{enumerate}

%
%
%

The rest of this paper is organized as follows.
We give a brief review of the research on federated learning and differential privacy in Section~\ref{sec:related}.
Section~\ref{sec:privacy_algo} introduces our training framework. Section~\ref{sec:privacy}
presents the privacy notion and analysis.
Section~\ref{sec:expr}  presents the numerical experiments.

\subsection{Related Work}
\label{sec:related}
There is a growing body of work that have looked at privacy properties in the context of federated learning.
\cite{mcmahan2017learning} introduces two algorithms, \emph{differentially private federated stochastic gradient descent} (\texttt{DP-FedSGD}) and \emph{differentially private federated averaging} (\texttt{DP-FedAvg}), and studies their privacy properties. The privacy notion is defined on user level. Namely, two datasets $S$ and $S'$
are said to be neighboring if $S'$ can be obtained by completely removing one client's data from $S$.
Such an attack might be impractical for real world applications.
Algorithmically, \texttt{DP-FedSGD} is a direct extension of ``non-federated'' \texttt{DP-SGD} \cite{abadi2016deep} to the distributed optimization setting, where the gradients
of each client is clipped and aggregated in every iteration, whereas \texttt{DP-FedAvg} performs approximated \texttt{DP-SGD} on the server. In essence, the differences of local models before and after local training are treated as the surrogates of gradients and sent to the server.
A similar algorithm approximating \texttt{DP-SGD} is proposed in \cite{geyer2017differentially}. \cite{singh2020differentially} uses an algorithm similar to \texttt{DP-FedSGD} for the architecture search problem, and their privacy guarantee
acts on user level too. \cite{li2019privatemeta} studies the online transfer learning and introduces a notion called task global privacy that works on record level. However, the online setting assumes the client only interacts with the server once and does not extend to the federated setting. 
\cite{truex2019hybrid} generalizes the central differential privacy into the distributed
setting, which can be considered as training a single shared private model when the
dataset is split into several partitions on different machines. Even though the authors
consider privacy at record level, this work does not have setups such as individual clients
have their own models, client sampling, etc.

One privacy notion that is related to the general concept of privacy in
federated learning is local differential privacy
\cite{evfimievski2003limiting, localdp}. Local differential privacy does not assume a trusted data
aggregator. Each data record is randomly perturbed before sending to the
data aggregator, and the aggregator build models using the noisy data.
The perturbation algorithm is locally differentially private
if the outputs of any pair of possible data records are indistinguishable.
Although conceptual connected, local differential privacy does not perfectly extend to the general
federated learning environment. Under the local differential privacy framework, the noisy data are finally
centralized in a central aggregator, where all the training happens; whereas a general form of
federated learning allows the participants have their own control of data and models.
Besides, local differential privacy is a strong notion that often requires a large amount of noise and thus
leads to degraded model performance.

Another related notion is joint differential privacy, proposed by \cite{kearns2014mechanism} to
study the behavior of ``recommender mechanisms'' for large games. It has been applied to the
context of private convex programming for problems
whose solution can be divided between different agents \cite{hsu2016private,
hsu2016jointly}, e.g. the multi-commodity flow problem.
Informally, joint differential privacy ensures the joint distribution of the outputs
for Agent $j\neq i$ to be insensitive to the input provided by Agent $i$. It is
similar to the our one-vs.-all notion \emph{strong federated $f$-differential privacy} (see
Definition~\ref{def:privacy_strong}), but acts on user
level.

Despite the granularity and concrete notion of the privacy guarantee, a formal privacy definition is needed to precisely
quantify the privacy loss. The most popular statistical privacy definition to date is $(\epsilon,
\delta)$-differential privacy \cite{Dwork2006, Dwork2006_2}.
It is widely applied in industrial applications and academic research, including some previous work on private federated learning \cite{mcmahan2016communication, geyer2017differentially, li2019privatemeta}.
Unfortunately, this privacy definition does not well handle
the cumulative privacy loss under the composition of private algorithms \cite{kairouz2017composition, complexity},
which is a fundamental problem to address in privacy analysis, and also
needed in analyzing the federated learning algorithms.
The need for a better treatment of composition has motivated much work in
proposing divergence-based relaxations of $(\epsilon, \delta)$-differential privacy
relaxations \cite{concentrated,concentrated2,mironov2017renyi,tcdp}.
Meanwhile, another line of research has established the connection between differential privacy and
hypothesis testing \cite{wasserman2010statistical, kairouz2017composition, liu2019investigating, balle2019hypothesis}.
Recently, \cite{dong2019gaussian} proposes a hypothesis testing-based privacy notion termed $f$-\textit{differential privacy}. This privacy definition
characterizes the privacy guarantee using
the trade-off between type I and type II errors given via the associated hypothesis testing problem. In the case of testing for normal distributions, $f$-differential privacy reduces to {Gaussian differential privacy}. Owing to its lossless reasoning about composition and privacy amplification by subsampling, the use of $f$-differential privacy gives sharp, analytically tractable expressions for the privacy guarantees of training deep learning models \cite{bu2019deep} (see also \cite{zheng2020sharp}). Throughout this paper, we use GDP as our privacy analysis framework.

\section{Private Federated Learning}
\label{sec:privacy_algo}
Let $m$ denote the number of clients. Each Client $i$ has access to its local
dataset $\Si$, where the data are i.i.d sampled from local
distribution $\D_i$. The classic federated learning algorithms \cite{mcmahan2016communication,konevcny2016federated} aim at learning one global model $\wgnoise$
that performs well over all the clients.
This implicitly makes an underlying assumption that the data are homogeneous, i.e., $\D_1 = \cdots = \D_m$, yet in practice
data might not be identically distributed across clients.
To take into account of the heterogeneity of user data distributions,
there is a surge of interest to assume non-identical data distributions with the possibility of $\D_i \neq \D_j$,
and learn personalized models
\cite{dinh2020moreau, huang2020Attentive, hanzely2020mixture, deng2020adaptive}.

We propose a unified framework \fedsync that addresses both heterogeneous and homogeneous settings, see Algorithm~\ref{alg:fedsync}.
Each Client $i$ will obtain a specific model $\winoise$, and a global model $\wgnoise$
is still formed and utilized. The homogeneous setting boils down to a special case where $\winoise = \wgnoise, i \in [m]$ (we use $[m]$ to denote $\{1, \hdots, m\}$).
\fedsync subsumes a large family of existing federated learning algorithms, including \fedavg \cite{mcmahan2016communication} and many others
\cite{li2018fedprox,dinh2020moreau,huang2020Attentive,hanzely2020mixture, deng2020adaptive}.

\begin{algorithm}[t]
\SetKw{kwInput}{Input:}
\SetKw{kwOutput}{Output:}
\SetKwBlock{DoParallel}{Client $i \in \Omega$ do in parallel}{}
\SetKwBlock{DoGlobal}{Server do}{}
\DontPrintSemicolon
\kwInput{initialization $w_0$, number of local iterations $K$, number of synchronization rounds $R$, sync probability $p$}\\
Initialization:$\wi, \hinoise \leftarrow w_0$\;
\For{r = 1, \ldots R}{
    Poisson sample a subset of clients $\Omega \subseteq \set{1, \ldots, m}$ with probability $p$ \;
    \DoParallel{
        \tcp{\small local training for $K$ iterations}
        $\winoise \leftarrow \text{LocalPrivateTraining}(\Si, \hinoise, K)$ \;
        Send $\winoise$ to Server\;
    }
    \DoGlobal{
        $\wgnoise \leftarrow (1 - \eta) \wgnoise + \eta \frac{1}{|\Omega|} \sum_{i\in \Omega} \winoise$\;
        $\hinoise \leftarrow F_i(\wgnoise), \; i \in \Omega$\;
        Push noisy helper model $\hinoise$ to Client $i$, $i \in \Omega$\;
    }
}
\caption{\fedsync Framework}
\label{alg:fedsync}
\end{algorithm}

%

In \fedsync, all the clients start from the same model $w_0$.
 To mimic the practical behavior that not all the
clients sync with the server simultaneously, in every synchronization round
we sample a subset of clients to perform local training
and sync with the server. If Client $i$ is selected, it pulls a helper model
$\hinoise$ from the server and then performs local private training for $K$ iterations.
The helper model $\hinoise$ can be the global aggregation $\wgnoise$ \cite{mcmahan2016communication, li2018fedprox}, or personalized \cite{dinh2020moreau, huang2020Attentive}. Various ways have been proposed to utilize $\hinoise$ to improve local training,
including initializing local models
\cite{dinh2020moreau, hanzely2020mixture}, regularizing local training
\cite{li2018fedprox,huang2020Attentive},
and iterative interpolating with local updates \cite{deng2020adaptive}.

The local private training can be carried out in different ways too. For instance, one can
use noiseless local training and perturb the model before synchronization using Laplacian or Gaussian mechanism.
Alternatively, one can conduct \dpsgd \cite{abadi2016deep} directly, where
the gradient is perturbed in each iteration. The disadvantage of \dpsgd is that it is slow in computation, due to its need
of clipping the per-sample gradient at every iteration.
However, we observed that \dpsgd usually leads to better prediction accuracy, therefore we shall use \dpsgd for our analysis and experiments.

Next, Client $i$ sends the private model $\winoise$ to the server.
The server aggregates the received models and then updates the corresponding helper models.
There are plenty of ways of computing the helper model. If the $F_i$ function is the identity map $F_i(w) = w, i \in [m]$, all the clients will receive
the same helper model $\wgnoise$. This is the setup used in \fedavg \cite{mcmahan2016communication}.
Another simple but effective observation is that $\hinoise$ is a
convex combination of the noisy global model $\wglobal$ and local model $\wi$ \cite{dinh2020moreau, hanzely2020mixture}:
\begin{equation}
    F_i(\wgnoise) = (1 - \alpha_i) \wi + \alpha_i \wgnoise.
    \label{eq:hi_intepolation}
\end{equation}
There are more sophisticated constructions of personalized helper models that fit in our framework, for example, an attention-based weighted averaging \cite{huang2020Attentive}.

We close this section with an overview discussion of the privacy guarantee of \fedsync.
\begin{enumerate}[wide, labelwidth=!, labelindent=0pt]
\item It is easy to see that Client $i$ can only probe the dataset of Client $j$ through the helper model $\hinoise$. This becomes the focal point for our analysis throughout this paper.

\item Given a global model $\wgnoise$, the helper model $\hinoise$ is a transformation of $\wgnoise$ through the mapping $F_i$.
    Regardless of the form of $F_i$, this step would not cause additional privacy leakage since differential privacy is immune to
    post-processing \cite{dwork2014algorithmic}. It is then natural to ask the following questions:

\begin{enumerate}
    \item[(i)] Why not just compute a noiseless global model
    $\wglobal$ and inject noise before or after the transformation? For instance, on the server one can conduct
    \[
    \begin{aligned}
    \wglobal & \leftarrow (1 - \eta) \wglobal + \eta \frac{1}{|\Omega|} \sum_{i\in \Omega} \wi, \\
    \hinoise & \leftarrow F_i\big(\wglobal + \N(0, \sigma^2 I) \big), i \in \Omega.
    \end{aligned}
    \]
    \item[(ii)]Is it equivalent to directly send $\wgnoise$ to the clients and let them apply the transformation $F_i$'s themselves?
\end{enumerate}
\end{enumerate}
    The procedure in (i) indeed protects the privacy of Client $i$ and reduces
    the computational burden incurred by \dpsgd. However, computing a noiseless global model will require all the clients send
    noiseless local models to the server, which
    imposes an extra assumption about a trustworthy server.
    The answer to (ii) depends on the concrete form of $F_i$.
    If the mapping $F_i$ is free of other private local models, deterministic, and invertible,
    the privacy cost before and after applying $F_i$ is the same. In this scenario, there is no difference between sending $\wgnoise$ or $\hinoise$. Nevertheless, post-processing might be able to amplify the privacy. Consider a constant function $F_i$ that outputs the zero vector for any input. This simple function achieves perfect privacy. For those cases,
    sending $\wgnoise$ will be less private then sending $\hinoise$.
    To keep our analysis general for all algorithms that fit in \fedsync, we shall assume no knowledge of $F_i$ in our analysis. For a specific algorithm, potential tighter bounds might be obtained by taking prior knowledge of $F_i$.

\section{Privacy Analysis}
\label{sec:privacy}
We first review Gaussian differential privacy in Section~\ref{sec:privacy_gdp}, which is the analysis tool we exploit.
Next, we introduce our private notations in Section~\ref{sec:privacy_notion} and analyze the privacy guarantee
of \fedsync in Section~\ref{sec:privacy_analysis}.

\subsection{Preliminaries}
\label{sec:privacy_gdp}
Let us start from the hypothesis testing interpretation of differential privacy, which is the foundation of GDP. Let $\A$ denote a randomized algorithm that takes a dataset $S$ as input. 
$S'$ is a neighboring dataset of $S$ in the sense that $S$ and $S'$ differ in only one individual.
Let $P$ and $Q$ denote the probability distribution of $\A(S)$ and $\A(S')$, respectively. 
Differential privacy attempts to measure the difficulty for an adversary to identify the presence or absence of any individual in $S$ via leveraging the output of $\A$. Equivalently, an adversary performs the following hypothesis testing problem \cite{wasserman2010statistical}:
\[
    H_0: \text{output} \, \sim P \;\; \text{vs} \;\; H_1: \text{output} \, \sim Q.
\]
Intuitively, a privacy breach occurs if the adversary makes the right decision, and
the privacy guarantee of $\A$ boils down to the difficulty for an adversary to tell the two distributions apart. 
\cite{dong2019gaussian} proposes to use the trade-off
between type I and type II errors of the optimal likelihood ratio tests 
at level $\alpha$ as a measure of the privacy guarantee,
where $\alpha$ ranges from $0$ to $1$. Formally, let $\phi$ be a rejection rule for testing against $H_0$ against $H_1$. The type I and type II error of $\phi$ are $\E_P(\phi)$ and $1 - \E_Q(\phi)$, respectively.
The trade-off function $T: [0,1] \rightarrow [0,1]$
between the two probability distributions $P$ and $Q$ is defined as
\[
T(P, Q)(\alpha) = \inf_{\phi} \set{1 - \E_Q(\phi): \E_P(\phi) \leq \alpha}.
\]
In short, for a fixed significance level $\alpha$, $T(P,Q)(\alpha)$ is the minimum type II error that a test can achieve
at that level. The optimal tests are given by the Neyman--Pearson lemma, and can be interpreted as the most powerful adversaries.
Let us define the relation $f \ge g$ if $f(\alpha) \ge g(\alpha)$ for all $0 \le \alpha \le 1$.
Intuitively speaking, a larger trade-off function implies the more private the associated algorithm is.
A special case of interest is when the two distributions are the same and perfect privacy is attained.
The corresponding trade-off function is $T(P,P)(\alpha) = 1 - \alpha$, which we denote by $\Id(\alpha)$.
With the above definitions in place,  \cite{dong2019gaussian} introduces the following privacy definition,
with a little abuse of notation by using $\A(S)$ to denote the output distribution of algorithm $\A$ on input dataset $S$. 
\begin{definition}\label{def:fdp}
Let $f$ be a trade-off function. An algorithm $\A$ is $f$-differentially private if $T(\A(S), \A(S')) \geq f$ for any pair of neighboring datasets $S$ and $S'$. 
\end{definition}
When the trade-off function is defined between two Gaussian distributions, we obtain a subfamily of $f$-differential privacy guarantees called {Gaussian differential privacy}.
\begin{definition}\label{def:gdp}
Let $\Phi$ denote the cumulative distribution function of the
standard normal distribution. 
For $\mu \geq 0$, let $G_\mu := T(\N(0,1), \N(\mu, 1)) \equiv \Phi(\Phi^{-1}(1-\alpha) - \mu)$.
An algorithm $\A$ is $\mu$-GDP if $T(\A(S), \A(S')) \geq G_\mu$ for any pair of neighboring datasets $S$ and $S'$.
\end{definition}

One advantage of $f$-differential privacy is that the composition of algorithms 
can be neatly handled. 
The composition primitive refers to an algorithm $\A$ that consists of $R$ algorithms $\A_1, \ldots, \A_R$, where $\A_i$ observes
both the input dataset and output from all previous algorithms. 
Let $f_1 = T(P_1, Q_1)$ and $f_2 = T(P_2, Q_2)$, 
\cite{dong2019gaussian} defines a binary
operator $\otimes$ on trade-off functions such that $f_1 \otimes f_2 = T(P_1 \times P_2,
Q_1 \times Q_2)$, where $P_1 \times P_2$ is the distribution product.  This operator
is commutative and associative, and provides elegant formulations for the composition
of private algorithms.
\begin{lemma}[\cite{dong2019gaussian}]
\label{lemma:fdp_composition}
If $\A_i$ is $f_i$-differentially private for $1 \le i \le R$, then the composed algorithm $\A$ is $f_1
\otimes \cdots \otimes f_R$-differentially private.  
\end{lemma}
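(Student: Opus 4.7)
The plan is to induct on $R$. The base case $R=1$ is immediate; the inductive step reduces via associativity of $\otimes$ to the two-algorithm case, applied with $\A_R$ composed against the outer algorithm $(\A_1,\ldots,\A_{R-1})$ which, by the inductive hypothesis, is already $f_1\otimes\cdots\otimes f_{R-1}$-differentially private. So the whole task reduces to proving the case $R=2$.

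For $R=2$, fix neighboring datasets $S,S'$. Let $P_1,Q_1$ denote the output laws of $\A_1$ on $S,S'$, and, for each first-stage output $y_1$, let $P_2^{y_1},Q_2^{y_1}$ denote the conditional laws of $\A_2(S,y_1)$ and $\A_2(S',y_1)$. The joint output laws factor as $P(\mathrm dy_1,\mathrm dy_2)=P_1(\mathrm dy_1)\,P_2^{y_1}(\mathrm dy_2)$ and analogously for $Q$. For any rejection rule $\phi$ on the joint output space, set $\psi(y_1):=\E_{P_2^{y_1}}[\phi(y_1,\cdot)]$ and $\tilde\psi(y_1):=\E_{Q_2^{y_1}}[\phi(y_1,\cdot)]$; the tower property then gives $\E_P[\phi]=\E_{P_1}[\psi]$ and $\E_Q[\phi]=\E_{Q_1}[\tilde\psi]$. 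Pointwise application of the $f_2$-DP assumption yields $1-\tilde\psi(y_1)\ge f_2(\psi(y_1))$ for every $y_1$, and the $f_1$-DP assumption controls the outer pair $(\psi,\tilde\psi)\in[0,1]^2$.

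The main obstacle is stitching these two inequalities into exactly the tensorized trade-off $f_1\otimes f_2$, which is defined via an independent product of distributions rather than via an adaptive composition. The clean route is a dual, test-function characterization of $f$-DP: $T(P,Q)\ge f$ iff $1-\E_Q[g]\ge f(\E_P[g])$ for every measurable $g\in[0,1]$. Applying this characterization first conditionally on $y_1$ with $f_2$ and test $\phi(y_1,\cdot)$, and then marginally over $y_1$ with $f_1$ and test $\psi$, while using convexity of trade-off functions to justify Jensen-type interchanges, produces $1-\E_Q[\phi]\ge (f_1\otimes f_2)(\E_P[\phi])$, i.e.\ $T(P,Q)\ge f_1\otimes f_2$. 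Post-processing invariance of $f$-DP takes care of the measurability bookkeeping for the regular conditional distributions $P_2^{y_1},Q_2^{y_1}$, which closes the case $R=2$ and, through the induction, the full lemma.
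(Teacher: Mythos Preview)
The paper does not prove this lemma at all: it is imported verbatim from \cite{dong2019gaussian} and used as a black box. So there is no in-paper argument to compare your proposal against.

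Judged on its own merits, your reduction to $R=2$ and the conditioning that yields $\E_P[\phi]=\E_{P_1}[\psi]$, $\E_Q[\phi]=\E_{Q_1}[\tilde\psi]$, and $1-\tilde\psi(y_1)\ge f_2(\psi(y_1))$ are all fine. The gap is the ``stitching'' sentence. If one actually carries out the step you describe---apply the $f_1$-DP test inequality to $\psi$ (or to $1-f_2\circ\psi$) and use convexity/monotonicity of $f_2$ via Jensen---the chain one obtains is
\[
1-\E_Q[\phi]\;\ge\;\E_{Q_1}\big[f_2(\psi)\big]\;\ge\;f_2\big(\E_{Q_1}[\psi]\big)\;\ge\;f_2\big(1-f_1(\alpha)\big),
\]
or symmetrically $f_1\big(1-f_2(\alpha)\big)$. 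Neither of these equals $(f_1\otimes f_2)(\alpha)$ in general, and in fact they are strictly weaker: for $f_1=f_2=G_1$ one has $f_1\otimes f_2=G_{\sqrt{2}}$, and at $\alpha=1/2$ the Jensen bound gives $\Phi(-2)\approx 0.023$ while $(f_1\otimes f_2)(1/2)=\Phi(-\sqrt{2})\approx 0.079$. So the phrase ``convexity \ldots\ produces $1-\E_Q[\phi]\ge (f_1\otimes f_2)(\E_P[\phi])$'' is exactly where the real content of the theorem lives, and the elementary route you outline lands strictly below the target. The argument in \cite{dong2019gaussian} requires an additional ingredient---either a Blackwell-type coupling of the adaptive experiment to the product experiment, or the explicit variational description of $f_1\otimes f_2$---to close this gap.
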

\begin{lemma}[\cite{dong2019gaussian}]
\label{lemma:gdp_composition}
The $R$-fold composition of $\mu_i$-GDP algorithms is $\sqrt{\mu_1^2 + \cdots + \mu_n^2}$-GDP.
\end{lemma}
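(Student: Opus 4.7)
The plan is to combine the general $f$-DP composition theorem (Lemma~\ref{lemma:fdp_composition}) with an explicit evaluation of the tensor product of Gaussian trade-off functions. Since each $\A_i$ is $\mu_i$-GDP, Definition~\ref{def:gdp} says $\A_i$ is $G_{\mu_i}$-differentially private, so Lemma~\ref{lemma:fdp_composition} immediately yields that the composed algorithm $\A$ is $G_{\mu_1} \otimes \cdots \otimes G_{\mu_R}$-DP. It therefore suffices to establish the identity
\[
G_{\mu_1} \otimes \cdots \otimes G_{\mu_R} \;=\; G_{\sqrt{\mu_1^2 + \cdots + \mu_R^2}}.
\]

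By associativity of $\otimes$, I would reduce to the case $R=2$ and then iterate. Unfolding the definition,
\[
G_{\mu_1}\otimes G_{\mu_2} \;=\; T\!\bigl(\N(0,1)\times\N(0,1),\, \N(\mu_1,1)\times\N(\mu_2,1)\bigr),
\]
which is exactly the trade-off function for testing $H_0: X\sim\N(\zero, I_2)$ against $H_1: X\sim\N(\bm{\mu}, I_2)$ with $\bm{\mu} = (\mu_1,\mu_2)$. By the Neyman--Pearson lemma, the optimal level-$\alpha$ test is a likelihood-ratio test, and for two multivariate normals with common covariance the log-likelihood ratio is an affine function of $\bm{\mu}^\top X$. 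Thus $\bm{\mu}^\top X / \|\bm{\mu}\|$ is a sufficient statistic that is distributed as $\N(0,1)$ under $H_0$ and as $\N(\|\bm{\mu}\|, 1)$ under $H_1$, and the two-dimensional testing problem has the same trade-off function as this one-dimensional Gaussian testing problem, namely $G_{\|\bm{\mu}\|}=G_{\sqrt{\mu_1^2+\mu_2^2}}$. Induction on $R$ (combining $G_{\sqrt{\mu_1^2+\cdots+\mu_k^2}}$ with $G_{\mu_{k+1}}$ via the same argument) delivers the stated $\sqrt{\mu_1^2+\cdots+\mu_R^2}$-GDP conclusion.

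The main obstacle, and the only non-routine step, is justifying that the trade-off function is invariant under passing to a sufficient statistic. Concretely, one needs that every rejection region obtained by thresholding the likelihood ratio on $\mathbb{R}^2$ can be realized, at the same type I error, by thresholding the scalar $\bm{\mu}^\top X$, and vice versa. For Gaussians this is immediate from the closed form of the Radon--Nikodym derivative: $\log(dQ/dP)(x) = \bm{\mu}^\top x - \tfrac12\|\bm{\mu}\|^2$ is strictly monotone in $\bm{\mu}^\top x$, so the two families of rejection regions coincide and the infima defining $T(\cdot,\cdot)(\alpha)$ agree. Once this reduction is in hand, the rest is a one-line computation with the standard normal CDF.
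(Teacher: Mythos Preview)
Your argument is correct and is exactly the standard proof of this fact: reduce via Lemma~\ref{lemma:fdp_composition} to computing $G_{\mu_1}\otimes\cdots\otimes G_{\mu_R}$, then use Neyman--Pearson on the product Gaussians to see that the likelihood ratio depends on $x$ only through $\bm{\mu}^\top x$, collapsing the problem to a one-dimensional Gaussian test with shift $\|\bm{\mu}\|$. Note, however, that the present paper does not give its own proof of this lemma---it is quoted directly from \cite{dong2019gaussian}---so there is nothing in the paper to compare against; your proof simply reproduces the argument in that reference.
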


\subsection{Federated \texorpdfstring{$f$}{f}-Differential Privacy}
\label{sec:privacy_notion}
Section~\ref{sec:privacy_algo} has discussed that the privacy leakage of Client $j$ to Client $i$ is determined by the helper model $\hinoise$, which motivates the following definitions. 

Recall that $\Sj$ is the dataset of Client $j$.
Let $\Sjj$ denote a neighboring dataset of $\Sj$, 
i.e., $\Sjj$ and $\Sj$ differ by only one entry. 
Let $\bS = \left(S^{(1)}, \ldots, S^{(m)} \right)$ denote the joint dataset across clients. 
Let $M(\cdot) = (M_1(\cdot), \ldots, M_m(\cdot))$ be the randomized federated algorithm that returns the helper models to clients: $M_i(\bS) = \hinoise$ is the helper 
model for Client $i$. Note that for $M_i$, the usage of $\Sj_{j\neq i}$ is implicit: Client $i$ is blind to those datasets. We write $\bSj$ if it is neighboring with $\bS$ in the $j$-th component:
$ \bSj = (S^{(1)}, \ldots, \Sjj, \ldots, S^{(n)}). $
The following two definitions quantitatively describe how well every client could protect her/his own data against
the other clients.
\begin{definition}
\label{def:privacy_weak}
    A randomized federated learning algorithm $M$ satisfies the \emph{weak federated $f$-differential privacy} if for any $i \neq j$, it
holds that $T\big(M_i(\bS), M_i(\bSj)\big) \geq f.$
\end{definition}
\begin{definition}
\label{def:privacy_strong}
    Let $M_{-j}$ denote the randomized output of all the helper models except $j$.  $M$ satisfies
    the \emph{strong federated $f$-differential privacy} if it holds that for any
    $j$, 
    $T\big(M_{-j}(\bS), M_{-j}(\bSj)\big) \geq f.$
    This is equivalent to
    $
        T\big(\prod_{i\neq j}M_i(\bS), \prod_{i\neq j} M_i(\bSj)\big) \geq f.
    $
\end{definition}

We remark that Definition~\ref{def:privacy_weak} is a one-vs.-one privacy notion. Under this notation, every client is protected from the
attack from any other malicious client.
Definition~\ref{def:privacy_strong} is a one-vs.-all privacy notion. In the worst case, the other
clients would make allies and attack Client $i$ together. An algorithm $M$ satisfying
Definition~\ref{def:privacy_strong} could guarantee the privacy of Client $i$ even in this situation. In other words,
if $M$ satisfies the strong federated $f$-differential privacy, then it satisfies the weak federated $f$-differential privacy.

\subsection{Analysis}
\label{sec:privacy_analysis}
Let $\Htilde_i$ denote the update of $\hinoise$ on the server. 
In practice, if Client $i$ is not sampled for synchronization,
the algorithm does not release a model to Client $i$, thus the perfect privacy of all
the other clients' data is achieved. In the privacy analysis,
this is equivalent to releasing a constant number that carries zero information.
Letting $\bS_\Omega$ denote the subsampled dataset, we can write this update as: 
\begin{equation}
    \Htilde_i(\bS_\Omega) = \begin{cases}
    \hinoise(\bS_\Omega), & \text{if} \; i \in \Omega, \\
    \zero, & \text{otherwise}.
    \end{cases}
\end{equation}
Let $\sample$ denote the Possion subsampling of clients for synchronization.
The update of $\hinoise$ for one synchronization round is the subsampled algorithm $\Htilde_i \circ \sample$. 
We remark that
the subsampling step $\sample(S) = \Omega$ is an intermediate step that is not released, and the subsampled algorithm
$\Htilde_i \circ \sample$ should be considered as a whole. 
Our target to analyze is essentially the composition of $R$ copies of $\Htilde_i \circ \sample$:
\begin{equation}\label{eq:M_to_H}
\begin{aligned}
    & \hspace{2pt} T\big(M_i(\bS), M_i(\bSj)\big) \\
= & \hspace{2pt} T\big( (\Htilde_i \circ \sample)^{\otimes R}(\bS),  (\Htilde_i \circ \sample)^{\otimes R}(\bSj)\big).
\end{aligned}
\end{equation}

The analysis has three steps. We first need to understand the privacy guarantee of the algorithm $\Htilde_i$, without sampling. 
The second step is to figure out the guarantee of the subsampled algorithm $\Htilde_i \circ \sample$. Last, 
we apply the composition theorem of $f$-differential privacy to obtain the final guarantee. The results are presented in Lemma~\ref{lemma:one_update},
Lemma~\ref{lemma:one_update_sampled}, and Theorem~\ref{thm:alg_1} in order.

\begin{lemma}
\label{lemma:one_update}
For any Client $j$, suppose the local training of $\wjnoise$ is $f_j$-differentially private. It holds that
\[
T\big( \Htilde_i(\bS), \Htilde_i(\bSj) \big) \geq f_j, \; i \in [m].
\]
\end{lemma}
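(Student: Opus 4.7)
The key observation is that in the server's update defining $\Htilde_i$, the neighboring coordinate $S^{(j)}$ (versus $S^{'(j)}$) enters the computation through only one place: the local private training output $\wjnoise$. Every other quantity in the definition of $\Htilde_i$ is either deterministic or depends on fresh randomness and on $S^{(k)}$ for $k\neq j$, and is therefore identically distributed under $\bS$ and $\bSj$. So the whole proof is a post-processing argument.

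Concretely, I would proceed as follows. First, I would write the server-side update explicitly, namely
\[
\Htilde_i(\bS) \;=\; F_i\!\left( (1-\eta)\wgnoise \;+\; \frac{\eta}{|\Omega|}\Big[ \sum_{k\in\Omega,\,k\neq j}\!\!\!\wi[k](\bS) \;+\; \wjnoise(\bS)\,\mathbf{1}\{j\in\Omega\} \Big]\right)
\]
(with the convention $\Htilde_i = \zero$ if $i\notin\Omega$), and split into two cases. \emph{Case 1: $j\notin\Omega$.} Client $j$'s data is never touched, so $\Htilde_i(\bS)$ and $\Htilde_i(\bSj)$ are identically distributed and $T(\Htilde_i(\bS),\Htilde_i(\bSj))=\Id\ge f_j$ trivially. \emph{Case 2: $j\in\Omega$.} Then $S^{(j)}$ enters only through $\wjnoise$.

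Second, I would formalize the post-processing. Define a randomized map $G$ whose randomness consists of (i) the independent local-training randomness used to produce $\wi[k]$ for $k\neq j$ and (ii) any intrinsic randomness of $F_i$, and which acts on an input $w$ by
\[
G(w) \;=\; F_i\!\left( (1-\eta)\wgnoise \;+\; \frac{\eta}{|\Omega|}\Big[ \sum_{k\in\Omega,\,k\neq j}\!\!\!\wi[k] \;+\; w \Big]\right).
\]
Because $S^{(j)}$ does not appear in the construction of $G$, this map is independent of the $j$-th dataset, and we have the identity in distribution $\Htilde_i(\bS) = G(\wjnoise(\bS))$ and $\Htilde_i(\bSj) = G(\wjnoise(\bSj))$.

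Third, I would invoke the post-processing property of $f$-differential privacy: for any (possibly randomized) map $G$ independent of the data, $T(G(P),G(Q)) \ge T(P,Q)$. Combined with the assumed $f_j$-differential privacy of the local training, $T(\wjnoise(\bS),\wjnoise(\bSj))\ge f_j$, this yields $T(\Htilde_i(\bS),\Htilde_i(\bSj))\ge f_j$, completing the proof.

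The argument is essentially mechanical; the only subtlety worth flagging is the bookkeeping that the "post-processing" $G$ is genuinely randomized (it folds in the noise of the other clients' local training and of $F_i$), so one must be careful to note that this randomness is drawn independently of $S^{(j)}$. Once that is observed, post-processing closure of $f$-DP---which does not require $G$ to be deterministic---finishes the argument immediately.
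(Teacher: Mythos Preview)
Your proposal is correct and rests on the same idea as the paper's proof: the only place $S^{(j)}$ enters $\Htilde_i$ is through $\wjnoise$, and everything downstream is post-processing that cannot degrade the trade-off function. Two minor remarks. First, this lemma is explicitly the \emph{no-sampling} case (the sampling is handled separately in the next lemma), so here $\Omega=[m]$ and your Case~1 ($j\notin\Omega$) is vacuous; you can drop the case split entirely. Second, the paper phrases the argument slightly differently: it observes that the affine aggregation $(1-\eta)\wgnoise+\frac{\eta}{m}\sum_k \tilde w^{(k)}$ is an \emph{invertible} function of $\wjnoise$ (with the other summands held fixed), which gives equality $T(\Htilde_i(\bS),\Htilde_i(\bSj))=T(\wjnoise(\Sj),\wjnoise(\Sjj))$, and then applies post-processing only for the $F_i$ step. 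Your purely post-processing route yields only the inequality, but that is all the lemma asks for, and your formulation via a randomized map $G$ independent of $S^{(j)}$ is arguably cleaner and more carefully stated.
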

\begin{proof}
See Appendix~\ref{sec:proof_nosampling}.
\end{proof}
This lemma implies that for any Client $j$, the privacy guarantee holds uniformly the same for all the other clients.
Intuitively, the privacy loss is determined once Client $j$ dispatches $\wjnoise$,
and the subsequent post-processing of $\wjnoise$ will incur no extra privacy loss.
The privacy leakage to the other clients will only
differ if Client $j$ sends different models with different levels of noise to the other clients, explicitly or implicitly. Since each client only communicates with the server in \fedsync, we can guarantee the privacy protection
is uniform over all the other clients. 

Next, we analyze the subsampled algorithm $\Htilde_i \circ \sample$. 
Compared with the original algorithm, subsampling amplifies the privacy guarantee. Such amplification 
is due to the fact that if Client $j$ is not included in one round of synchronization,
it enjoys perfect privacy for that round. Our results are described formally in the following lemma.
\begin{lemma}
\label{lemma:one_update_sampled}
Let $g_{p, j} = \max(f_j, 1 - \alpha - p^2)$. Suppose the local training algorithm of $\wjnoise$ is $f_j$-differentially private. 
Consider the subsampled algorithm $\Htilde_i \circ \sample$ with $ 0 \leq p \leq 1$. For any $i \in [m]$, it holds
that
\[
 T\big( \Htilde_i \circ \sample(\bS), \Htilde_i \circ \sample(\bSj) \big) \geq g_{p, j}.
\]
\end{lemma}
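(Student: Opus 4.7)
The plan is to produce two separate lower bounds on $T\big(\Htilde_i\circ\sample(\bS),\Htilde_i\circ\sample(\bSj)\big)$ and then take their pointwise maximum. Both bounds start from a case analysis on the sampled subset $\Omega$. Under Poisson sampling with rate $p$, the event $E=\{\{i,j\}\subseteq\Omega\}$ has probability $p^2$. Outside $E$, the two outputs agree as random variables: if $i\notin\Omega$ then $\Htilde_i$ returns the constant $\zero$, and if $j\notin\Omega$ then $\bS_\Omega=\bSj_\Omega$ so the algorithm reads the same data. Hence conditional on $\Omega=\omega$ with $\{i,j\}\not\subseteq\omega$, the conditional trade-off function equals $\Id(\alpha)=1-\alpha$, while for $\omega\supseteq\{i,j\}$, Lemma~\ref{lemma:one_update} applied to the deterministic-subsample sub-algorithm gives $T\geq f_j$. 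Since $\Id\geq f_j$ pointwise, in every case $T(\Htilde_i(\bS_\omega),\Htilde_i(\bSj_\omega))\geq f_j$.

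For the first half of $g_{p,j}$, I propagate the conditional bound through the mixture over $\omega$ via a convexity-of-$f$-DP argument: if $T(P_\omega,Q_\omega)\geq f$ for every $\omega$ under a common sampling law $\pi$, then the $\pi$-mixtures satisfy the same bound. I would verify this by fixing any rejection rule $\phi$, setting $\alpha_\omega=\E_{P_\omega}[\phi]$, and noting that the overall type-II error equals $\int\pi(\omega)(1-\E_{Q_\omega}[\phi])\,d\omega\geq\int\pi(\omega)f(\alpha_\omega)\,d\omega\geq f\big(\int\pi(\omega)\alpha_\omega\,d\omega\big)$ by Jensen's inequality against the convexity of the trade-off function $f$; monotonicity of $f$ then upgrades this to $f(\alpha)$ whenever the overall type-I error is at most $\alpha$. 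Applying this with $\pi$ the Poisson sampling law (identical under $\bS$ and $\bSj$) delivers the $f_j$ bound.

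For the second half $1-\alpha-p^2$, I argue through total variation. By the triangle inequality and linearity, $\mathrm{TV}(\Htilde_i\circ\sample(\bS),\Htilde_i\circ\sample(\bSj))\leq\E_\pi\bigl[\mathrm{TV}(\Htilde_i(\bS_\omega),\Htilde_i(\bSj_\omega))\bigr]$, and the case analysis shows the integrand vanishes outside $E$, so the whole expectation is at most $\Pr(E)=p^2$. The elementary inequality $T(P,Q)(\alpha)\geq 1-\alpha-\mathrm{TV}(P,Q)$, which follows from $Q(A)\leq P(A)+\mathrm{TV}$ applied to the optimal rejection set, finishes this bound. Combining the two bounds yields $T\geq\max(f_j,\,1-\alpha-p^2)=g_{p,j}$.

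The main obstacle I anticipate is the convexity-of-$f$-DP step in the second paragraph: the intuition that $E^c$ contributes a ``perfectly private'' branch that can only amplify privacy is natural, but this is not literal post-processing, so the statement needs the Jensen/augmented-observation derivation sketched above. I would either cite it as an established tool in the $f$-DP literature (it parallels arguments in \cite{dong2019gaussian}) or include a short self-contained derivation, since na\"ive analogs of the claim can fail for pathological discontinuous ``trade-off functions'' that would not qualify as convex trade-off functions to begin with.
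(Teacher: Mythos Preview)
Your proposal is correct and follows essentially the same approach as the paper: the paper packages the argument into a technical lemma (Lemma~\ref{lemma:tradeoff_two_mixtures}) whose proof splits into exactly your two halves, using the same Jensen/convexity step for the $f_j$ bound and the same $\alpha_\omega+\beta_\omega\geq 1-\mathrm{TV}$ fact for the $1-\alpha-p^2$ bound. The only cosmetic difference is that the paper derives the second bound by manipulating a fixed rejection rule, whereas you pass through $\mathrm{TV}(A,B)\leq p^2$ directly; both routes are equivalent.
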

\begin{proof}
See Appendix~\ref{sec:proof_sampled}. 
\end{proof}
We emphasize that the technical needs for analyzing the client sampling of \fedsync is different from the analysis of
private SGD with Poisson sampling \cite{bu2019deep}, and the existing results do not directly apply to our case. The main difference is that for \fedsync, the privacy loss of Client $j$ to Client $i$ is affected by whether $i$ and $j$ are both sampled in $\Omega$.
From the hypothesis testing point of view, the two distributions the adversary is trying to tell apart,
$\Htilde_i \circ \sample(\bS)$ and $\Htilde_i \circ \sample(\bSj)$, are both mixture models of two groups: one group contains the cases
both $i$ and $j$ are sampled, the other group contains the other cases. Whereas, for analyzing private SGD, only one of the two distributions need to be divided into two groups.

Finally, we apply the composition theorem of $f$-differential privacy (Lemma~\ref{lemma:fdp_composition}) to obtain the following results.
\begin{theorem}
\label{thm:alg_1}
Let $g_{p, j} = \max(f_j, 1 - \alpha - p^2)$ be defined as in Lemma~\ref{lemma:one_update_sampled}. It holds that 
\[
\begin{aligned}
T\big( M_i(\bS),  M_i(\bSj)\big) \geq g_{p, j}^{\otimes R}, \; i \in [m].
\end{aligned}
\]
Consequently, Algorithm~\ref{alg:fedsync} satisfies weak federated $f$-differential privacy for 
$\displaystyle f = g_{p, j_{\min}}^{\otimes R}$,
where $g_{p, j_{\min}} = \min \set{g_{p, 1}, \ldots, g_{p, m}}$.
It also satisfies strong federated $ \displaystyle g_{p, j_{\min}}^{\otimes (m-1)R}    $-differential privacy.
\end{theorem}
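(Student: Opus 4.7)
}
The plan is to reduce the three assertions to the two preceding lemmas plus the composition theorem (Lemma~\ref{lemma:fdp_composition}), treating the decomposition in equation~(\ref{eq:M_to_H}) as the backbone.

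First, I would establish the pointwise bound $T(M_i(\bS), M_i(\bSj)) \geq g_{p,j}^{\otimes R}$. By equation~(\ref{eq:M_to_H}), $M_i$ is exactly the $R$-fold adaptive composition of the single-round mechanism $\Htilde_i \circ \sample$. Lemma~\ref{lemma:one_update_sampled} delivers the per-round guarantee $T(\Htilde_i \circ \sample(\bS), \Htilde_i \circ \sample(\bSj)) \geq g_{p,j}$, and each subsequent round observes the output of the previous ones, which is the exact setting of Lemma~\ref{lemma:fdp_composition}. Composing $R$ factors of $g_{p,j}$ under $\otimes$ then yields $g_{p,j}^{\otimes R}$.

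Second, for the weak federated $f$-differential privacy claim, I need the bound to hold uniformly over all pairs $i \neq j$. Since $\otimes$ is monotone in each of its arguments with respect to the pointwise order on trade-off functions, replacing $g_{p,j}$ by $g_{p,j_{\min}} := \min_j g_{p,j}$ can only decrease the composed trade-off function; hence $g_{p,j}^{\otimes R} \geq g_{p,j_{\min}}^{\otimes R}$ pointwise for every $j$, and the definition of weak federated $f$-DP is satisfied with $f = g_{p,j_{\min}}^{\otimes R}$.

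Third, for the strong federated DP claim, I would start from the equivalent product formulation provided in Definition~\ref{def:privacy_strong}:
\[
T\bigl(M_{-j}(\bS), M_{-j}(\bSj)\bigr) = T\Bigl(\prod_{i\neq j} M_i(\bS),\; \prod_{i\neq j} M_i(\bSj)\Bigr).
\]
By the tensorization identity $T(P_1\times P_2, Q_1\times Q_2) = T(P_1,Q_1)\otimes T(P_2,Q_2)$ underlying the $\otimes$ operator, the right-hand side factors as $\bigotimes_{i\neq j} T(M_i(\bS), M_i(\bSj))$. Substituting the bound $T(M_i(\bS), M_i(\bSj)) \geq g_{p,j}^{\otimes R} \geq g_{p,j_{\min}}^{\otimes R}$ from the first step into each of the $m-1$ factors and invoking monotonicity of $\otimes$ once more gives $g_{p,j_{\min}}^{\otimes(m-1)R}$, which is the advertised strong federated $f$-DP guarantee.

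The main conceptual obstacle is the strong DP step: one must be careful that the privacy accounting in Definition~\ref{def:privacy_strong} treats the joint release across clients through its product form, so that the per-client guarantees compose via $\otimes$ rather than via post-processing of a shared $\wgnoise$. Once this is granted, the rest is routine application of Lemmas~\ref{lemma:one_update_sampled} and \ref{lemma:fdp_composition} together with monotonicity of $\otimes$.
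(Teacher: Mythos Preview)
Your proposal is correct and follows essentially the same route as the paper: apply Lemma~\ref{lemma:one_update_sampled} per round, invoke the composition theorem (Lemma~\ref{lemma:fdp_composition}) across the $R$ rounds via the decomposition in equation~(\ref{eq:M_to_H}), take the worst-case $g_{p,j_{\min}}$ for the weak guarantee, and then compose once more across the $m-1$ clients using the product formulation in Definition~\ref{def:privacy_strong} for the strong guarantee. Your explicit invocation of monotonicity of $\otimes$ and your remark about the tensorization step in the strong case are, if anything, more careful than the paper's own write-up, which handles these points tacitly.
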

\begin{proof}
See Appendix~\ref{sec:proof_thm_alg_1}.
\end{proof}

\subsection{Local Private Training}
\begin{algorithm}[t]
\SetKw{kwInput}{Input:}
\SetKw{kwOutput}{Output:}
\SetEndCharOfAlgoLine{\relax}
\DontPrintSemicolon
\kwInput{loss $L$, dataset $\Sj$, helper model $\hjnoise$, batch size $B_j$, noise scale $\sigma_j$,
maximum gradient norm $C$, learning rates $\gamma_1, \ldots, \gamma_K$}\\
\textbf{Initialize:} $\wj \leftarrow \hjnoise$\;
\For{k = 1, \ldots K}{
    Sample $I \subseteq \set{1, \ldots, |\Sj|} $ with size $B_j$ uniformly at random \;
    \tcp{\small Compute and clip the per-sample gradient}
    \For{$\ell \in I$}{
        $v_\ell =  \nabla L(\wj, x_\ell, y_\ell)$\;
        $v_\ell \leftarrow v_\ell / \max\set{1, \norm{v_\ell}/C}$\;
    }
    $\displaystyle \wj \leftarrow \wj -  \frac{\gamma_k}{B} \bigg( \sum_{\ell \in I} v_\ell + \N(0, 4 C^2 \sigma_j^2 I) \bigg)$
}
\caption{Example Local Training of Client $j$ using NoisySGD}
\label{alg:localsgd}
\end{algorithm}

In this section, we present an example local training algorithm using noisy SGD as
the optimizer, see Algorithm~\ref{alg:localsgd}.
We analyze its privacy guarantee and present a final privacy bound of \fedsync
after injecting it into Algorithm~\ref{alg:fedsync}. 
Although Algorithm~\ref{alg:localsgd}
uses SGD as the optimizer, our results hold for a large number of other optimizers,
including Adam \cite{adam}, AdaGrad \cite{adagrad}, Momentum SGD \cite{momentum}, etc. 
In brief, this is because the statistics like
the momentum, the running mean of the gradient, are deterministic functions of the noisy gradient, thus
no additional privacy loss would be incurred for those computations.

Let $f_p = p f + (1 - p) \Id$ for some $p \in [0, 1]$.
Let $f^{-1}_p$ be the inverse function of $f_p$: $f^{-1}_p(\alpha) = \inf_{t \in [0, 1]} \set{f_p(t) \leq \alpha}$.
Define a trade-off function $C_p(f) = \min \set{f_p, f^{-1}_p}^{**}$ where  $f^{**}$ denotes
the double conjugate of $f$. The function $f_p$ is asymmetric in general but $C_p(f)$ is symmetric, see Figure~\ref{fig:cpf}.
\begin{theorem}\label{thm:alg_1_and_2}
Suppose Algorithm~\ref{alg:localsgd} is used in Algorithm~\ref{alg:fedsync} for the local private training. 
It holds that for any Client $j$,
\[
    T\big( M_i(\bS),  M_i(\bSj)\big) \geq  C_{B_j/n_j}(G_{1/\sigma_j})^{\otimes KR}, \; i \in [m].
\]
Furthermore, if $\frac{B_j}{n_j}\sqrt{KR} \rightarrow c_j$ as $\sqrt{KR} \rightarrow \infty$, then $C_{B_j/n_j}(G_{1/\sigma_j})^{\otimes KR} \rightarrow G_{\mu_j}$
where
\[
\mu_j = \sqrt{2} c_j \sqrt{e^{\sigma_j^{-2}} \Phi(1.5 \sigma_j^{-1}) + 3 \Phi(-0.5 \sigma_j^{-1}) - 2}.
\]
Consequently, Algorithm~\ref{alg:fedsync} satisfies weak federated $G_{\mu_{\max}}$-differential privacy
and strong federated $G_{\sqrt{m-1}\mu_{\max}}$-differential privacy,
where $\mu_{\max} = \max \set{\mu_1, \ldots, \mu_m}$.
\end{theorem}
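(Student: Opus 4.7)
The plan is to establish the three claims of Theorem~\ref{thm:alg_1_and_2} in the following order: (i) derive the per-iteration privacy profile of the local training routine in Algorithm~\ref{alg:localsgd}; (ii) compose this with the federated-level analysis already developed in Theorem~\ref{thm:alg_1}; (iii) pass to the asymptotic Gaussian limit; and finally (iv) translate the resulting one-vs.-one guarantee into the weak/strong federated $f$-DP bounds.

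\textbf{Step 1: Privacy of the local NoisySGD routine.} Each iteration of Algorithm~\ref{alg:localsgd} is a subsampled Gaussian mechanism: after clipping the per-sample gradient to norm $C$, the $\ell_2$-sensitivity of the sum $\sum_{\ell\in I} v_\ell$ to changing one record of $\Sj$ is at most $2C$, and the noise variance $4C^2\sigma_j^2$ yields a Gaussian mechanism that is $G_{1/\sigma_j}$-DP. Poisson subsampling with rate $B_j/n_j$ then gives an $f_{B_j/n_j}$ trade-off function where $f=G_{1/\sigma_j}$, and symmetrization of this bound produces $C_{B_j/n_j}(G_{1/\sigma_j})$. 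Composing across the $K$ local iterations via Lemma~\ref{lemma:fdp_composition} gives that $\wjnoise$ is $C_{B_j/n_j}(G_{1/\sigma_j})^{\otimes K}$-differentially private, exactly as derived for NoisySGD in \citep{bu2019deep}. (Noting that for adaptive optimizers such as Adam/AdaGrad/Momentum, the additional statistics are post-processed functions of the noisy gradients and incur no extra privacy cost.)

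\textbf{Step 2: Federated composition.} With $f_j := C_{B_j/n_j}(G_{1/\sigma_j})^{\otimes K}$, invoke Theorem~\ref{thm:alg_1}. Since $g_{p,j} = \max(f_j, 1-\alpha-p^2) \ge f_j$, the subsampling at the client level only strengthens the bound, so
\begin{equation*}
T\bigl(M_i(\bS), M_i(\bSj)\bigr) \;\ge\; g_{p,j}^{\otimes R} \;\ge\; f_j^{\otimes R} \;=\; C_{B_j/n_j}(G_{1/\sigma_j})^{\otimes KR},
\end{equation*}
which gives the first display of the theorem. This step is essentially a bookkeeping combination of the local and federated privacy analyses, with the client-level subsampling factored out in favor of a cleaner bound.

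\textbf{Step 3: Asymptotic $\mu_j$-GDP limit.} This is the main technical step, and the place where one leans most heavily on the central-limit machinery for $f$-DP developed in \citep{dong2019gaussian, bu2019deep}. Under the scaling $(B_j/n_j)\sqrt{KR}\to c_j$, one applies the CLT of Theorem 3.4 in \citep{dong2019gaussian} to the $KR$-fold composition of $C_{B_j/n_j}(G_{1/\sigma_j})$. The required moment/divergence quantity $\chi^2\!\bigl(G_{1/\sigma_j}\bigr)$ is a standard calculation that expands into $e^{\sigma_j^{-2}}\Phi(1.5\sigma_j^{-1}) + 3\Phi(-0.5\sigma_j^{-1}) - 2$, whose square root multiplied by $\sqrt{2}\,c_j$ yields exactly the stated $\mu_j$; the derivation is the same as in \citep{bu2019deep}. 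The subtlety to verify is the Lindeberg-type condition for the CLT, which holds since $\sqrt{KR}/(B_j/n_j)\to\infty$ makes the single-step $f$-DP profile converge to $\mathrm{Id}$ while the aggregate deviation stays of constant order; I expect this to be the main obstacle, but it can be handled by a direct invocation of the CLT of \citep{dong2019gaussian}.

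\textbf{Step 4: Weak and strong federated guarantees.} For weak federated privacy, Definition~\ref{def:privacy_weak} requires a uniform bound over $j$, and since $G_\mu$ is monotone decreasing in $\mu$, taking $\mu_{\max}=\max_j \mu_j$ yields $G_{\mu_{\max}}$-GDP uniformly. For strong federated privacy, Definition~\ref{def:privacy_strong} requires bounding $T\bigl(\prod_{i\neq j} M_i(\bS), \prod_{i\neq j} M_i(\bSj)\bigr)$. Each factor $M_i$ is (asymptotically) $G_{\mu_j}$-GDP, so Lemma~\ref{lemma:gdp_composition} applied to the product over the $m-1$ indices $i\neq j$ gives $\sqrt{m-1}\,\mu_j$-GDP, and taking $j$-maximum produces $G_{\sqrt{m-1}\,\mu_{\max}}$. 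This finishes the theorem.
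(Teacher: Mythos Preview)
Your proposal is correct and follows essentially the same route as the paper: invoke the NoisySGD privacy guarantee $f_j = C_{B_j/n_j}(G_{1/\sigma_j})^{\otimes K}$ from \cite{dong2019gaussian,bu2019deep}, plug it into Theorem~\ref{thm:alg_1} while discarding the client-sampling gain $1-\alpha-p^2$ to obtain $f_j^{\otimes R} = C_{B_j/n_j}(G_{1/\sigma_j})^{\otimes KR}$, then apply the subsampled-Gaussian CLT (the paper cites Corollary~5.4 of \cite{dong2019gaussian} rather than Theorem~3.4, which already packages the moment computation you sketch), and finally take $\mu_{\max}$ and use Lemma~\ref{lemma:gdp_composition} for the strong guarantee. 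The only cosmetic mismatch is that Algorithm~\ref{alg:localsgd} samples a fixed-size batch uniformly rather than by Poisson, but the paper itself applies the Poisson-subsampling trade-off $C_p$, so your Step~1 is consistent with the paper's own treatment.
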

\begin{proof}
See Appendix~\ref{sec:proof_thm_alg_1_and_2}. 
\end{proof}
\section{Experiments}
\label{sec:expr}
We use Algorithms~\ref{alg:fedsync} and \ref{alg:localsgd} to train private deep
learning models for two computer vision tasks: MNIST digit recoginition
\cite{lecun1998mnist} and CIFAR-10 object classification
\cite{krizhevsky2009learning}\footnote{Our code is available at
    \url{https://github.com/enosair/federated-fdp}.}.
To simulate the heterogeneous data distributions, we make non-IID partitions of the datasets, see below for the detailed descriptions. For all the experiments, we fix the aggregation parameter $\eta = 1$, use the interpolation method as in Equation~\eqref{eq:hi_intepolation} with $\alpha=0.1$ to compute the helper models, and clip the gradient with maximum norm $C=1$ when training private
models. For both tasks, we report the average testing accuracy along with the privacy guarantees we obtained, and compare with the non-private results under the same setting.  The algorithms and models we use might not yield the best possible prediction accuracy, 
but they are sufficient for the purposes of illustrating
our private notion and investigating the relative performance for private and non-private algorithms.

\subsection{Non-IID MNIST}
The MNIST dataset
contains 60,000 training images and 10,000 testing images.
We use a setup similar to \cite{mcmahan2016communication} to partition the data for 100 clients.
We sort the training data by digit label and evenly divide it into 400
shards. Each of 100 clients is assigned four random shards of
the data, so that most of the clients have examples of three or four digits.
For testing, each client will sample 200 examples with the same label she/he has seen
in training\footnote{In contrast to some previous works where only the training
data is non-IID, our testing data is also not identically distributed across the clients.}.
We use a CNN model with two convolution layers with 3 x 3 kernels,
followed by an FC layer with 128 units and
ReLu activation, and a final softmax output layer.
For local training, we use noisy Adam with base learning rate $0.001$.

\begin{table}[t]
\small
    \centering
    \begin{tabular}{c|c|c|c|c|c}
         \toprule
         $p$  & $\sigma$ & $R$ & $\mu_{\max}$ & Test Acc & Non-Pri Acc \\ \midrule \midrule
              & 1.0      & 93  & 2.71         & 90.03   &  98.74  \\
         1.0  & 0.9      & 83  & 3.10         & 90.25   &  98.72  \\
              & 0.75     & 64  & 3.96         & 90.10   &  98.55  \\
         \midrule
              & 1.0      & 194  & 3.92        & 90.02   &  98.90  \\
         0.5  & 0.9      & 176  & 4.51        & 90.02   &  98.83  \\
              & 0.75     & 127  & 5.58        & 90.11   &  98.54 \\
         \midrule
              & 1.0      & 386  & 5.52        & 90.00   &  98.75  \\
         0.25 & 0.9      & 325  & 6.13        & 90.00   &  98.75 \\
              & 0.75     & 245  & 7.75        & 90.04   &  98.55 \\
         \bottomrule
    \end{tabular}
    \caption{MNIST experiment results: the round of synchronization and corresponding privacy parameter
    when the average test accuracy reaches $90\%$.
    The privacy parameter is computed as in Theorem~\ref{thm:alg_1_and_2}.}
    \label{tbl:mnist}
\end{table}

\begin{table}[t]
    \centering
    \small
    \begin{tabular}{c|c|c|c|c|c|c}
        \toprule
         $\sigma$  & $B$  & $K$   & $R$ & Total Iter.    & Total Ex. & $\mu_{\max}$ \\ \midrule \midrule
                1  & 8    & 76    & 266 & 20216 & 161728 & 3.24\\
                   & 16   & 38    & 194 & 7372  & 117952 & 3.92\\
        \midrule
               0.9 & 8    & 76    & 229 & 17404 & 139232 & 3.64\\
                   & 16   & 38    & 176 & 6688  & 107008 & 4.51\\
        \midrule
              0.75 & 8    & 76    & 191 & 14516 & 116128 & 4.84\\
                   & 16   & 38    & 127 & 4826  & 77216  & 5.58\\
        \bottomrule
    \end{tabular}
    \caption{The trade-off between privacy and computation.
    The per-client total number of iterations is $KR$,
    the per-client total number of examples is $BKR$,
    and the results are reported when
    the average test accuracy reaches $90\%$.
    The privacy parameter $\mu_{\max}$ for small batch size ($B=8$) runs
    is roughly $0.83\times$ as large as obtained by the $B=16$ runs, yet the $B=8$ runs process
    approximately $1.39\times$ total number of data samples. The client sampling rate is $p=0.5$.
    }
    \label{tbl:mnist_batchsize}
 \end{table}

\begin{figure}[t]
    \centering
    \includegraphics[width=0.55\columnwidth]{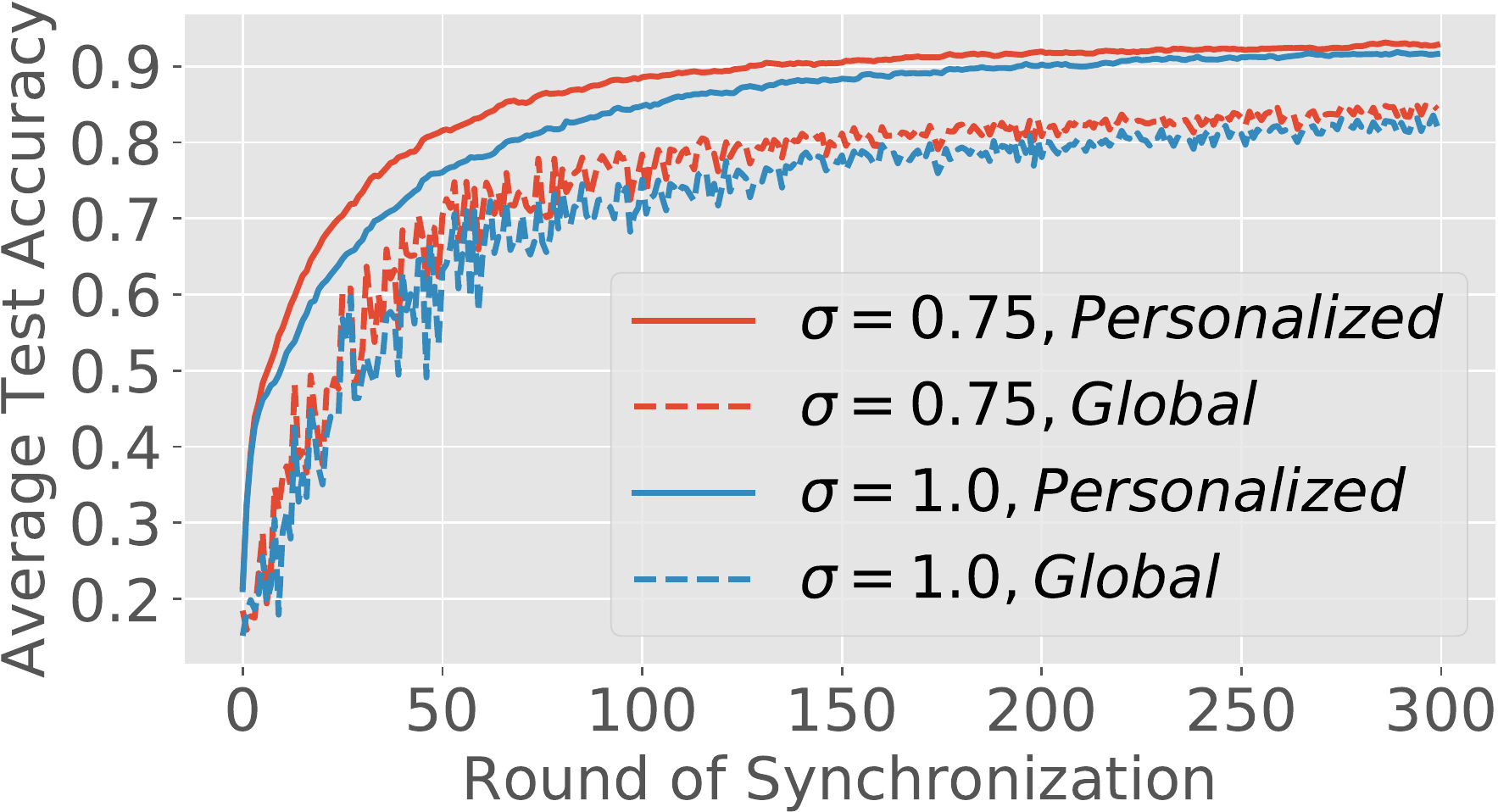}
    \vskip-5pt\caption{The personalized models outperform the global model in the MNIST experiments. The client sampling rate is $p=0.5$.}
    \label{fig:mnist_personalization}
    \vskip5pt
    \includegraphics[width=0.55\columnwidth]{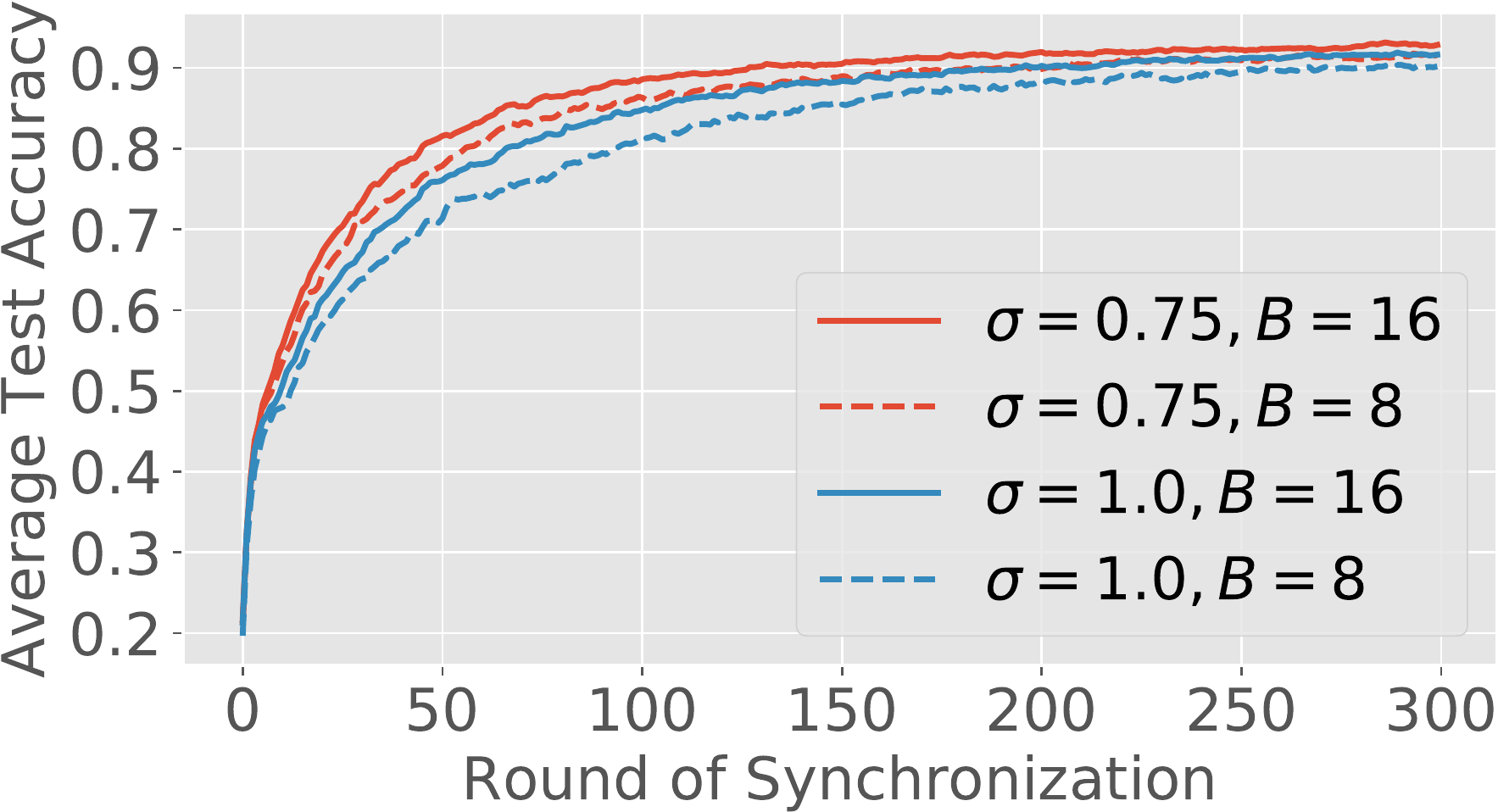}
    \vskip-5pt
    \caption{A larger batch size leads to faster convergence for the MNIST experiments. The client sampling rate is $p=0.5$.}
    \label{fig:mnist_batch_size_comp}
\end{figure}

\paragraph{Performance of Private Models.} We test three values for the client sampling rate $p$: $0.25, 0.5$ and $1.0$, and three values of noise scale
$\sigma: 0.75, 0.9$, and $1.0$. We use batch size $B=16$ and run local training for $K=38$ iterations between synchronization. The total number of samples processed between synchronization is $608$, so we are approximately running local training for one epoch.
Table~\ref{tbl:mnist} reports the synchronization rounds $R$ and the privacy parameter $\mu_{\max}$,
when the average prediction accuracy across 100 clients is above $90\%$.
Table~\ref{tbl:mnist} also shows an intuitive phenomenon:
a larger client sampling rate and a smaller noise level lead to faster convergence, see also Figure~\ref{fig:mnist_sampling_rate}.
One might notice the privacy parameter is slighter larger than one usually see in
a centralized training setting.  Recall that Theorem~\ref{thm:alg_1_and_2} states that the privacy parameter $\mu_j$ scales linearly with
a constant $c_j$.
Loosely speaking, $c_j$ is the product of
the data sampling rate $\frac{B_j}{n_j}$ and the squared training iterations $\sqrt{KR}$:
$\frac{B_j}{n_j}\sqrt{KR} \rightarrow c_j$ as $KR \rightarrow \infty$.
In our simulated federated environment, each client holds only $1\%$ data of
the whole dataset, and the batch size is approximately $1/16$ of the normal setting. Therefore,
the effective data sampling rate $B/n$ is much larger. Besides, it also takes much more
iterations for the algorithm to converge in the federated setting. This leads to a
larger privacy parameter. Interesting, there is also a trade-off between data sampling
rate and computing iterations which might affect the privacy parameter, and we shall
discuss this later in this section.


\paragraph{Accuracy Gain from Personalization.}
Figure~\ref{fig:mnist_personalization} investigates the personalization performance of our approach. It compares the
average test accuracy for the private global model and private personalized models.
We plot the results when the noise scale $\sigma=0.75$ and $1.0$, where the client sampling rate is $p=0.5$. For both cases, personalized models significantly outperform the global model. The results for the other sampling rates are similar.

\paragraph{Privacy vss Computation Trade-off.}
As presented in Theorem~\ref{thm:alg_1_and_2}, the privacy parameter $\mu_j$ scales linearly with
a constant $c_j$. Informally, this is the product of the data sampling rate $\frac{B_j}{n_j}$ and the squared training iterations
$\sqrt{KR}$. Since $c_j$ scales linearly with $B_j$ but sublinearly with $K$, using a smaller batch size would lead to a more private model if one processes the same amount
of total data examples. For example,
$\frac{B_j/2}{n_j}\sqrt{2KR} < \frac{B_j}{n_j}\sqrt{KR}$.
However, 
the batch size has great impact on the rate of convergence. Figure~\ref{fig:mnist_batch_size_comp}
illustrates this phenomenon. Fixing the client sampling rate $p=0.5$, we decrease the batch size from $B=16$ to $B=8$,
and double the number of local iterations to $K=76$.
The small batch size ($B=8$) runs take more rounds to achieve the same test accuracy, which means it
processes more data examples in total.
Table~\ref{tbl:mnist_batchsize} demonstrates such a trade-off between privacy and computation.
We compare the total number of training
iterations $KR$, the per-client total training examples $BKR$, and the privacy parameter $\mu_{\max}$.
As before, the results are reported when the average test accuracy achieves $90\%$.
Compared with the large batch size ($B=16$) runs, the small batch size ($B=8$) runs
obtain smaller privacy parameters, which are roughly $0.83\times$ as obtained by the $B=16$ runs.
However, they take approximately $2.78\times$ iterations to achieve $90\%$ accuracy,
which translates to $1.39\times$ total number of samples.

\subsection{Non-IID CIFAR}
The CIFAR-10 dataset contains 50,000 training images and 10,000 test images
of 10 classes. We use the same experiment setup as \cite{hsu2019measuring}.
There are 100 clients, each holds 500 training images and 200 testing images.
For each client, we generate data using the following probabilistic model: \\
   \hspace*{1em}1. Sample the class probability $q \sim \text{Dir}(\beta)$.\\
   \hspace*{1em}2. Sample $\theta^\text{tr} \sim \text{Multinomial}(q, 500)$.\\
   \hspace*{1em}3. Sample $\theta^\text{tr}_i$ images with label $i$ from the training set without replacement. \\
   \hspace*{1em}4. Likewise, sample $\theta^\text{te} \sim \text{Multinomial}(q, 200)$ and the testing data accordingly.\\
The hyperparameter $\alpha$ controls the heterogeneity of the client data distributions.  With $\beta \rightarrow \infty$,
all the clients have identical class distributions; with $\beta \rightarrow 0$, the probability vector $q$ will
be one-hot and each client will hold samples from only one class. We use $\beta=0.5$ throughout our experiments,
see Figure~\ref{fig:cifar_class_dict} for a visual illustration of the label proportions.
Due to the GPU memory limit, we use the CNN model from the TensorFlow tutorial\footnote{\url{https://www.tensorflow.org/tutorials/images/cnn}.}, like the previous work \cite{mcmahan2016communication, hsu2019measuring}. This architecture is not state-of-the-art for CIFAR,
but sufficient to demonstrate the relative performance for private and non-private models.

\begin{figure}[t]
    \centering
    \includegraphics[width=0.6\columnwidth]{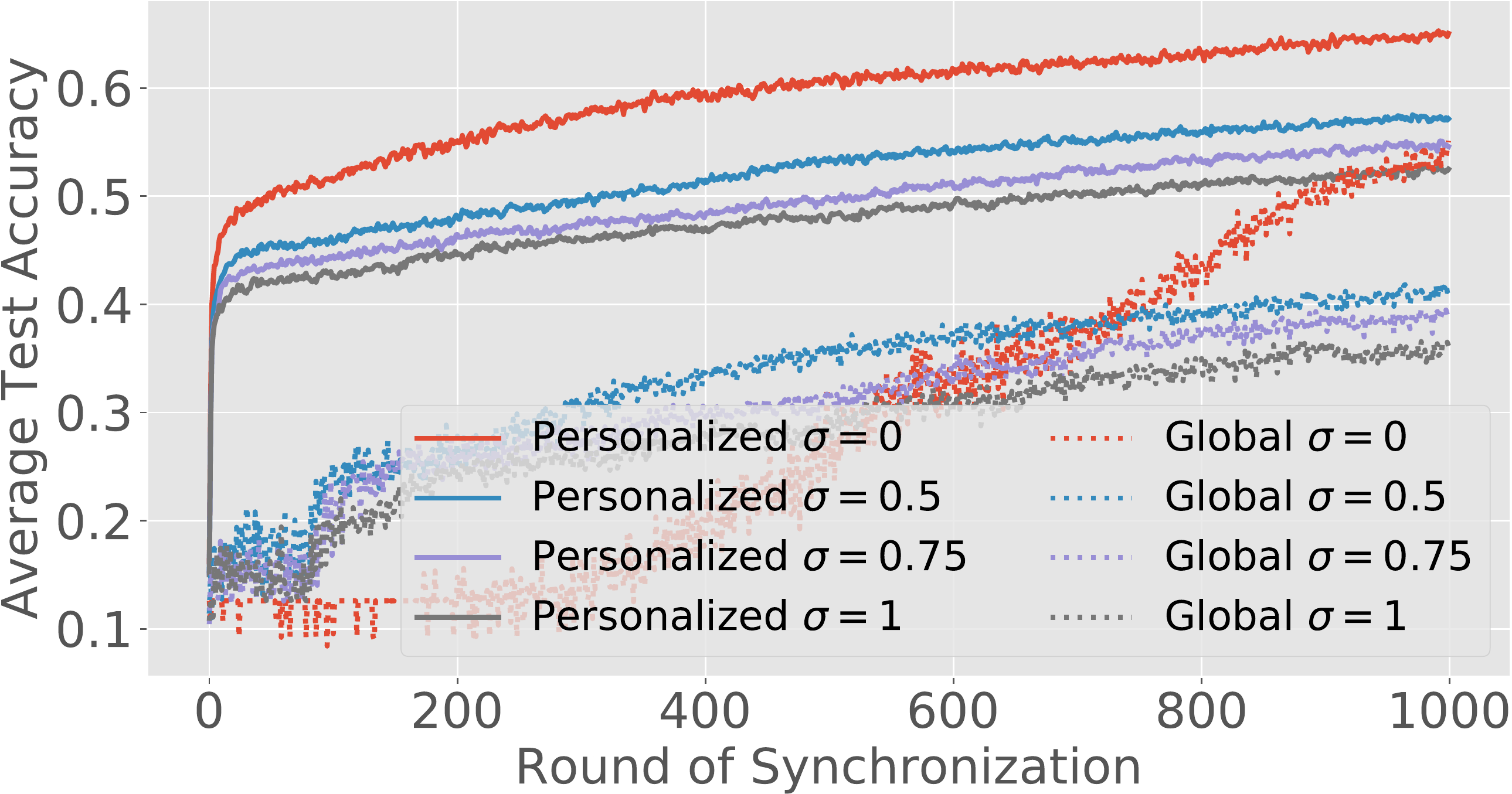}
    \vskip-5pt\caption{Average top-1 test accuracy vs. synchronization rounds for the CIFAR-10 experiments. The client sampling rate
    is $p=0.5$.}
    \label{fig:cifar_cnn}
\end{figure}

\begin{table}[tbh]
\small
    \centering
    \begin{tabular}{c|c|c|c|c|c}
         \toprule
         $p$  & $\sigma$ & $R$ & $\mu_{\max}$ & Top-1 Acc & Non-Pri Acc \\ \midrule \midrule
              & 1.0      & 468  & 6.70        & 52.03   &  64.72  \\
         1.0  & 0.75     & 321  & 9.77        & 52.22   &  62.55  \\
              & 0.5      & 207  & 26.81       & 52.23   &  59.61  \\
         \midrule
              & 1.0      & 904  & 9.31        & 52.07   &  64.65  \\
         0.5  & 0.75     & 671  & 14.13       & 52.04   &  62.53  \\
              & 0.5      & 405  & 37.51       & 52.01   &  59.85  \\
         \bottomrule
    \end{tabular}
    \caption{CIFAR-10 experiment results: the round of synchronization and the corresponding privacy parameter
    when the average top-1 accuracy reaches $52\%$.}
    \label{tbl:cifar_cnn}
\end{table}

We observe that Adam is more stable than SGD for training private models, although
SGD generalizes better on non-private models. Thereby, we train the models by
noisy Adam with base learning $0.005$ and weight decay $0.0005$. The learning rate is decayed
by a factor of $0.99$ every $10$ epochs. We use batch size $B=16$ and run local training for $K=32$ iterations.
Figure~\ref{fig:cifar_cnn} plots the top-1 test accuracy curve when the client samping rate $p=0.5$.
We can observe the privacy-accuracy trade-off: the test accuracy moderately decreases as the model becomes more private, i.e. trained with larger $\sigma$. Meanwhile, personalized models still outperform the global model.
Table~\ref{tbl:cifar_cnn} reports the round of synchronization and corresponding privacy parameter
when the average top-1 accuracy reaches $52\%$.



\subsubsection*{Acknowledgments}
We are grateful to Arun Kuchibhotla and Jinshuo Dong for insightful discussions.
This work was supported in part by NIH through R01-GM124111 and RF1-AG063481, NSF through CAREER DMS-1847415, CCF-1763314, and CCF-1934876, a Facebook Faculty Research Award, and an Alfred
Sloan Research Fellowship.

{
\small
\bibliographystyle{alpha}
\bibliography{ref}
}

\clearpage
\appendix
\section{Proof of Lemma~\ref{lemma:one_update}}
\label{sec:proof_nosampling}
As discussed in Section~\ref{sec:privacy_algo}, the privacy loss occurred at $\hinoise$ is lower bounded by the privacy loss occurred at $\wgnoise$. To keep our analysis general for all algorithms that fit in \fedsync, we shall assume no knowledge of $F_i$ and analyze $\wgnoise$.

Without sampling, we can write the update of $\wgnoise$ as:
\begin{equation}
    (1 - \eta) \wgnoise + \frac{ \eta }{m} \sum_{i\in [m]} \winoise(\Si).
\end{equation}
This is fully invertible function of $\wjnoise$, so that the privacy loss is of the updated $\wgnoise$
would be the same $\wjnoise$, i.e.,
\[ T\bigg( \Htilde_i(\bS), \Htilde_i(\bSj) \bigg) = T\bigg( \wjnoise(\Sj), \wjnoise(\Sjj) \bigg). \]
This lemma thus follows by the assumption.

\section{Proof of Lemma~\ref{lemma:one_update_sampled}.}
\label{sec:proof_sampled}
Let $\omega \in [0,1]^{m}$ be the indicator vector of the Possion sampling outcome: $\omega_i = 1$ if Client $i$ is selected in synchronization, i.e. $i \in \Omega$. We use $p_\omega$ to denote the probability that $\omega$ appears, namely,
$p_\omega = p^{s}(1-p)^{m-s}$ if $\omega$ has $s$ nonzero entries.

Let $E = \set{\omega:\,\omega_i = \omega_j = 1}$ denote that event that both Client $i$ and $j$ are selected, and let $E^c$ denote the complementary event that not both of them are selected.
The output distribution of the subsampled algorithm $\Htilde \circ \sample$ on dataset $S$ can be written as a mixture model
\begin{equation}
    \Htilde_i \circ \sample(\bS) = \sum_{\omega \in E} p_w P_\omega + \sum_{\omega \in E^c}  p_w Q_\omega,
\end{equation}
where we use $P_\omega$ to denote the output distribution associated with $\omega$ if $\omega \in E$, and use $Q_\omega$ for the other case.
It is easy to see that $P_\omega$ depends on dataset $\Sj$ but $Q_\omega$ does not. With the neighboring dataset $\bSj$, the distribution $\Htilde_i \circ \sample(S')$ can also be written as a mixture, yet only the components corresponding to cases where both $i$ and $j$ are selected will change. Specifically,
\begin{equation}
    \Htilde_i \circ \sample(\bSj) = \sum_{\omega \in E} p_w P'_\omega + \sum_{\omega \in E^c}  p_w Q_\omega.
\end{equation}
The following technical lemma helps us bound the trade-off function between $\Htilde_i \circ \sample(\bS)$ and $\Htilde_i \circ \sample(\bSj)$.

\begin{lemma}\label{lemma:tradeoff_two_mixtures}
Let $\F$ be an event space and $\F = E \cup E^c$ is a valid partition of $\F$.
Let $\omega$ denote an arbitrary event in $\F$, whose probability is $p_w$.
We have $\sum_{w \in\F} p_w = 1$. For each event $\omega \in \F$, $P_w$, $P'_w$ and $Q_w$ are distributions reside on a common
sample space. Consider two mixture distributions $\displaystyle A = \sum_{\omega \in E} p_\omega P_\omega + \sum_{\omega \in E^c} p_\omega Q_\omega$ and $\displaystyle B = \sum_{\omega \in E} p_\omega P'_\omega + \sum_{\omega \in E^c} p_\omega Q_\omega$. If there exists a trade-off function $f$ such that $T(P_\omega, P'_\omega) \geq f$ for all $\omega$, it holds that
\[
T(A, B)(\alpha) \geq \max \set{f(\alpha), 1 - \alpha - p_E}.
\]
\end{lemma}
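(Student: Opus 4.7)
The plan is to establish the two lower bounds $T(A,B)(\alpha) \ge 1-\alpha-p_E$ and $T(A,B)(\alpha)\ge f(\alpha)$ separately, where $p_E := \sum_{\omega\in E} p_\omega$, and then take their maximum. Neither bound is difficult in isolation; the real care lies in the second, which forces one to use slightly more of the structure of trade-off functions than bare convexity.

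The first bound is essentially a total-variation calculation. Since the $E^c$-components of $A$ and $B$ coincide, the signed measure $A-B$ is supported on the $E$-part, so for any rejection rule $\phi$,
\[
\bigl|\E_A[\phi]-\E_B[\phi]\bigr| = \Bigl|\sum_{\omega\in E} p_\omega\bigl(\E_{P_\omega}[\phi]-\E_{P'_\omega}[\phi]\bigr)\Bigr| \le p_E.
\]
If $\E_A[\phi]\le\alpha$, then $\E_B[\phi]\le\alpha+p_E$, so the type II error satisfies $1-\E_B[\phi]\ge 1-\alpha-p_E$, and taking the infimum over $\phi$ gives the first bound.

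For the bound $f(\alpha)$, fix any test $\phi$ with $\E_A[\phi]\le\alpha$ and set $\alpha_\omega := \E_{P_\omega}[\phi]$ for $\omega\in E$ and $\beta_\omega := \E_{Q_\omega}[\phi]$ for $\omega\in E^c$. The hypothesis $T(P_\omega, P'_\omega) \ge f$, read at level $\alpha_\omega$, gives $\E_{P'_\omega}[\phi]\le 1-f(\alpha_\omega)$. Summing the mixture components,
\[
1-\E_B[\phi] \ge \sum_{\omega\in E} p_\omega f(\alpha_\omega) + \sum_{\omega\in E^c} p_\omega(1-\beta_\omega).
\]
Write $\bar\alpha = \sum_{\omega\in E} p_\omega\alpha_\omega/p_E$ and $\bar\beta = \sum_{\omega\in E^c} p_\omega\beta_\omega/(1-p_E)$. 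I would then close the argument with three standard properties of any trade-off function: convexity, monotonicity (non-increasing), and the universal domination $f(x)\le 1-x$. Applying Jensen to the $E$-sum, using $1-\beta_\omega\ge f(\beta_\omega)$ on the $E^c$-sum, and then Jensen once more across the two groups yields
\[
1-\E_B[\phi] \ge p_E f(\bar\alpha) + (1-p_E) f(\bar\beta) \ge f\bigl(p_E\bar\alpha+(1-p_E)\bar\beta\bigr) = f(\E_A[\phi]) \ge f(\alpha),
\]
where the last step uses monotonicity together with $\E_A[\phi]\le\alpha$. Taking the infimum over $\phi$ gives $T(A,B)(\alpha)\ge f(\alpha)$.

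The main subtlety is recognizing that the universal inequality $f(x)\le 1-x$, which simply reflects that the constant randomized test rejecting with probability $x$ attains type II error $1-x$ at level $x$, is what lets the common $E^c$-mass be absorbed into a clean Jensen estimate. Without this observation one is stuck with only $p_E f(\bar\alpha)$ on the right-hand side and cannot recover $f(\alpha)$ uniformly in the sampling probability $p_E$. Everything else is bookkeeping with the convexity and monotonicity axioms baked into the $f$-DP framework.
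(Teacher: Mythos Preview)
Your proof is correct and follows essentially the same route as the paper's: both arguments establish the two lower bounds separately, using the cancellation of the common $E^c$-components for the $1-\alpha-p_E$ bound and the combination of convexity, the universal domination $f(x)\le 1-x$, and Jensen's inequality for the $f(\alpha)$ bound. Your total-variation argument for the first bound is slightly more direct than the paper's explicit computation invoking $\alpha_\omega+\beta_\omega\ge 1-\mathrm{TV}(P_\omega,P'_\omega)$, but the key ideas are identical.
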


Under the context of our problem, it holds that $\P(E) = p^2$ and $\P(E^c) = 1 - p^2$ due to the independence of sampling Client $i$ and $j$. Besides, for any fixed $\omega \in E$, using the same argument for Lemma~\ref{lemma:one_update},
we have $T(P_\omega, P'_\omega) =
T\big( \Htilde_i(\bS_\Omega), \Htilde_i(\bSj_\Omega) \big) \geq  f_j$. This proofs our results.

\subsection{Proof of Lemma~\ref{lemma:tradeoff_two_mixtures}}
\begin{proof}
Let $p_E = \P(w \in E)$. We can write
$$A = p_E \sum_{\omega \in E} p_{\omega | E} P_\omega + (1 - p_E) \sum_{\omega \in E^c} p_{\omega | E^c} Q_\omega$$
and
$$B = p_E \sum_{\omega \in E} p_{\omega | E} P'_\omega + (1 - p_E) \sum_{\omega \in E^c} p_{\omega | E^2} Q_\omega.$$

Suppose a rejection rule $\phi$ achieves type I error $\alpha$:
\begin{equation}\label{eq:lemma_mixutre_alpha}
    \alpha = \E_A[\phi] = p_E \sum_{\omega \in E} p_{\omega | E} \E_{P_\omega}[\phi] + (1 - p_E) \sum_{\omega \in E^c} p_{\omega | E^c} \E_{Q_\omega}[\phi].
\end{equation}

The type II error of $\phi$ is
\begin{equation}
\begin{aligned}
1 - \E_B[\phi]
& = 1 - p_E \sum_{\omega \in E} p_{\omega | E} \E_{P'_\omega}[\phi] - (1 - p_E) \sum_{\omega \in E^c} p_{\omega | E^c} \E_{Q_\omega}[\phi] \\
& = 1 - p_E + p_E\left( 1 - \sum_{\omega \in E} p_{\omega | E} \E_{P'_\omega}[\phi]    \right) - (1- p_E) \sum_{\omega \in E^c} p_{\omega | E^c} \E_{Q_\omega}[\phi] \\
& = p_E\left( 1 - \sum_{\omega \in E} p_{\omega | E} \E_{P'_\omega}[\phi]    \right) + (1 - p_E)\left( 1- \sum_{\omega \in E^c} p_{\omega | E^c} \E_{Q_\omega}[\phi]  \right)  \\
& =  p_E\left( \sum_{\omega \in E} p_{\omega | E} \left(1 - \E_{P'_\omega}[\phi]  \right)  \right) + (1 - p_E)\left( 1- \sum_{\omega \in E^c} p_{\omega | E^c} \E_{Q_\omega}[\phi]  \right) \\
& \stackrel{(i)}{\geq} p_E\left( \sum_{\omega \in E} p_{\omega | E} f ( \E_{P_\omega}[\phi] )   \right) + (1 - p_E)\left( 1- \sum_{\omega \in E^c} p_{\omega | E^c} \E_{Q_\omega}[\phi]  \right) \\
& \stackrel{(ii)}{\geq} p_E\left( \sum_{\omega \in E} p_{\omega | E} f( \E_{P_\omega}[\phi] )   \right) + (1 - p_E)f\left (\sum_{\omega \in E^c} p_{\omega | E^c} \E_{Q_\omega}[\phi]  \right) \\
& \stackrel{(iii)}{\geq} p_E f \left( \sum_{\omega \in E} p_{\omega | E} \E_{P_\omega}[\phi]   \right) + (1 - p_E)f\left (\sum_{\omega \in E^c} p_{\omega | E^c} \E_{Q_\omega}[\phi]  \right) \\
& \stackrel{(iv)}{\geq} f\left( p_E \sum_{\omega \in E} p_{\omega | E} \E_{P_\omega}[\phi] +  (1-p_E) \sum_{\omega \in E^c} p_{\omega | E^c} \E_{Q_\omega}[\phi]   \right) \\
& = f(\alpha),
\end{aligned}
\end{equation}
where
\begin{enumerate}
    \item[] (i) follows from the definition of the trade-off function: $T(P_\omega, P'_\omega) \geq f $ implies $1 - \E_{P'_\omega}[\phi] \geq f(\E_{P_\omega}[\phi])$,
    \item[] (ii) follows from the property of trade-off functions: $f(\alpha) \leq 1 - \alpha, \forall \alpha \in [0, 1]$,
    \item[] (iii) and (iv) follows from the Jensen's inequality for convex functions ($f$ is convex).
\end{enumerate}
It also holds that
\begin{equation}
\begin{aligned}
1 - \E_B[\phi]
& = 1 - p_E \sum_{\omega \in E} p_{\omega | E} \E_{P'_\omega}[\phi] - (1 - p_E) \sum_{\omega \in E^c} p_{\omega | E^c} \E_{Q_\omega}[\phi] \\
& \stackrel{(v)}{=} 1 - p_E \sum_{\omega \in E} p_{\omega | E} \E_{P'_\omega}[\phi] - \set{ \alpha -p_E \sum_{\omega \in E} p_{\omega | E} \E_{P_\omega}[\phi] } \\
& = 1 - \alpha - p_E + p_E \set{
1 - \sum_{\omega \in E} p_{\omega | E} \E_{P'_\omega}[\phi] + \sum_{\omega \in E} p_{\omega | E} \E_{P_\omega}[\phi]
} \\
& = 1 - \alpha - p_E + p_E \set{
\sum_{\omega \in E} p_{\omega | E} \left( 1 - \E_{P'_\omega}[\phi] + \E_{P_\omega}[\phi] \right)
} \\
& \stackrel{(vi)}{\geq} 1 - \alpha - p_E + p_E\set{\sum_{\omega \in E} p_{\omega | E}\big(1 - \text{TV}(P'_\omega, P_\omega)\big)} \\
& \stackrel{(vii)}{\geq} 1 - \alpha - p_E.
\end{aligned}
\end{equation}
The equality (v) follows from Equation~\eqref{eq:lemma_mixutre_alpha}.
For (vi) and (vii), consider the rejection rule $\phi$ for testing
$P_\omega$ versus $P'_\omega$. The type I error is $\alpha_\omega = \E_{P_\omega}[\phi]$ and type II error is $\beta_\omega = 1 - \E_{P'_\omega}[\phi]$. It is well known that
\[ \alpha_\omega + \beta_\omega \geq 1 - \text{TV}(P_\omega, P'_\omega), \]
where $\text{TV}(P_\omega, P'_\omega)$ is the total variation distance between $P_\omega$ and $P'_\omega$, which takes value between $0$ and $1$.
\end{proof}

\section{Proof of Theorem~\ref{thm:alg_1}}
\label{sec:proof_thm_alg_1}

Lemma~\ref{lemma:one_update_sampled}
shows that for any $i \in [m]$,
\begin{equation}
    T\big( \Htilde_i \circ \sample(\bS), \Htilde_i \circ \sample(\bSj) \big) \geq g_{p, j}.
\end{equation}
Recall that Equation~\ref{eq:M_to_H} established the equivalence between $T\big( (\Htilde_i \circ \sample)^{\otimes R}(\bS),  (\Htilde_i \circ \sample)^{\otimes R}(\bSj)\big)$ and $T\big( M_i(\bS),  M_i(\bSj)\big)$. By the composition theorem of $f$-differential privacy (Lemma~\ref{lemma:fdp_composition}), we have that for any $i \in [m]$,
\begin{equation}\label{eq:H_fdp_comp}
\begin{aligned}
& && T\big( M_i(\bS),  M_i(\bSj)\big)
& = && T\big( (\Htilde_i \circ \sample)^{\otimes R}(\bS),  (\Htilde_i \circ \sample)^{\otimes R}(\bSj)\big)
& \geq && g_{p, j}^{\otimes R}.
\end{aligned}
\end{equation}
The above result holds for a fixed Client $j$. Since the weak federated $f$-differential privacy notion (Definition~\ref{def:privacy_weak}) is defined for any pairs of $i, j$ such that $i \neq j$, we need to take the ``least private'' trade-off function as our lower bound. That is
$\displaystyle g_{p, j_{\min}}^{\otimes R}$,
where $g_{p, j_{\min}} = \min \set{g_{p, 1}, \ldots, g_{p, m}}$.

Last, the strong federated privacy lower bound can be obtained by applying the composition theorem again:
\[
T\bigg(\prod_{i\neq j}M_i(\bS), \prod_{i\neq j} M_i(\bSj)\bigg) =
\bigotimes_{i \neq j}  T\big(M_i(\bS), M_i(\bSj)\big) \geq g_{p, j_{\min}}^{\otimes (m-1)R}.
\]

\section{Proof of Theorem~\ref{thm:alg_1_and_2}}
\label{sec:proof_thm_alg_1_and_2}
Let $g_{p, j} = \max(f_j, 1 - \alpha - p^2)$.
By Theorem~\ref{thm:alg_1}, it holds that
\begin{equation}
\label{eq:proof_thm_alg_1_and_2_helper_1}
T\big( M_i(\bS),  M_i(\bSj)\big) \geq g_{p, j}^{\otimes R}, \; i \in [m].
\end{equation}
We can apply the CLT type of result in \cite[Theorem 3.5]{dong2019gaussian} to obtain
the asymptotic convergence of \eqref{eq:proof_thm_alg_1_and_2_helper_1}. Yet we found that taking the $1-\alpha-p^2$ component into account will give rise to a trade-off function that does not have an explicit form. Nonetheless, we can still lower bound
\begin{equation}
\label{eq:proof_thm_alg_1_and_2_helper_2}
   T\big( M_i(\bS),  M_i(\bSj)\big) \geq f_j^{\otimes R}, \; i \in [m].
\end{equation}
We then utilize the following result from \cite{dong2019gaussian} to obtain $f_j$.
\begin{lemma}[\cite{dong2019gaussian}]
Algorithm~\ref{alg:localsgd} is $C_{B_j/n_j}(G_{1/\sigma_j})^{\otimes K}$-differentially private.
\end{lemma}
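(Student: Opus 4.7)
The plan is to prove the lemma by stitching together three standard ingredients from the $f$-differential privacy framework of \cite{dong2019gaussian}: (i) the Gaussian mechanism guarantee for one full-batch noisy-gradient release, (ii) privacy amplification by subsampling via the $C_p$ operator, and (iii) adaptive composition across the $K$ inner iterations.

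First, I would fix a pair of neighboring client datasets $\Sj$ and $\Sjj$ differing in a single record and analyze a single iteration of Algorithm~\ref{alg:localsgd} \emph{without} subsampling: the release $\sum_{\ell} v_\ell + \N(0, 4C^2\sigma_j^2 I)$ taken over the entire client dataset. Because of the per-example clipping step $v_\ell \leftarrow v_\ell/\max\set{1, \norm{v_\ell}/C}$, each $v_\ell$ satisfies $\norm{v_\ell}\le C$, so the $L_2$-sensitivity of $\sum_\ell v_\ell$ under a one-record swap is at most $2C$. The Gaussian mechanism with noise standard deviation $2C\sigma_j$ then yields, by Definition~\ref{def:gdp}, the trade-off function $G_\mu$ with $\mu = 2C/(2C\sigma_j) = 1/\sigma_j$. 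The subsequent deterministic update $\wj \leftarrow \wj - (\gamma_k/B)\cdot (\text{noisy sum})$ is pure post-processing and does not degrade the guarantee.

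Second, I would invoke the subsampling theorem for $f$-DP from \cite{dong2019gaussian}: if a mechanism on the full dataset is $f$-DP, then pre-composing it with a uniform size-$B_j$ subsample of an $n_j$-point dataset produces a mechanism that is $C_{p_j}(f)$-DP, where $p_j := B_j/n_j$ and $C_{p_j}(f) = \min\set{f_{p_j}, f_{p_j}^{-1}}^{**}$ with $f_{p_j} = p_j f + (1-p_j)\Id$ in the notation introduced immediately above Theorem~\ref{thm:alg_1_and_2}. Instantiating this with $f = G_{1/\sigma_j}$, one full iteration of Algorithm~\ref{alg:localsgd} (sample $I$, clip, add Gaussian noise, take a step) is $C_{B_j/n_j}(G_{1/\sigma_j})$-DP.

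Third, the $K$ iterations are adaptively generated — the noisy gradient at step $k$ depends on the entire history of iterates — but this is precisely the setting covered by the adaptive composition theorem, Lemma~\ref{lemma:fdp_composition}: each step is $C_{B_j/n_j}(G_{1/\sigma_j})$-DP conditional on all prior outputs, so the $K$-fold $\otimes$-product gives the claimed $C_{B_j/n_j}(G_{1/\sigma_j})^{\otimes K}$ bound. The main technical obstacle is the subsampling step: one has to cleanly match the minibatch scheme in Algorithm~\ref{alg:localsgd} (uniform-without-replacement of fixed size $B_j$) to the hypotheses of the $C_p$ operator in \cite{dong2019gaussian}, and be careful with add/remove versus replace neighboring-dataset conventions so that the sensitivity $2C$ and the amplification rate $B_j/n_j$ come out with the right constants. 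The Gaussian-mechanism and composition pieces, once these constants are pinned down, are essentially mechanical.
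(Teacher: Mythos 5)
Your proposal is correct and follows exactly the standard three-step derivation (Gaussian mechanism with replace-one sensitivity $2C$ against noise $2C\sigma_j$ giving $G_{1/\sigma_j}$, amplification by subsampling via $C_{B_j/n_j}$, then $K$-fold adaptive composition via Lemma~\ref{lemma:fdp_composition}) that underlies the cited result; the paper itself offers no proof of this lemma and simply imports it from \cite{dong2019gaussian}, which is where this exact argument appears. The one caveat you rightly flag --- matching the fixed-size without-replacement minibatch scheme and the neighboring-dataset convention to the hypotheses of the subsampling theorem --- is indeed the only place where care is needed, and the constants you obtain are consistent with the paper's choice of noise variance $4C^2\sigma_j^2$.
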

Plugging $f_j = C_{B_j/n_j}(G_{1/\sigma_j})^{\otimes K}$ into Equation~\eqref{eq:proof_thm_alg_1_and_2_helper_2}, we obtain
\begin{equation}
   T\big( M_i(\bS),  M_i(\bSj)\big) \geq C_{B_j/n_j}(G_{1/\sigma_j})^{\otimes KR}, \; i \in [m].
\end{equation}
The asymptotic convergence then follows from Corollary 5.4 of \cite{dong2019gaussian}:
$C_{B_j/n_j}(G_{1/\sigma_j})^{\otimes KR} \rightarrow G_{\mu_j}$ if $\frac{B_j}{n_j}\sqrt{KR} \rightarrow c_j$ as $\sqrt{KR} \rightarrow \infty$
where
\[
\mu_j = \sqrt{2} c_j \sqrt{e^{\sigma_j^{-2}} \Phi(1.5 \sigma_j^{-1}) + 3 \Phi(-0.5 \sigma_j^{-1}) - 2}.
\]
Similar to the argument for Theorem~\ref{thm:alg_1}, we take the ``least private'' $G_{\mu_j}$ as the lower bound
for the weak federated $f$-differential privacy notion, which is $G_{\mu_{\max}}$ with $\mu_{\max} = \max \set{\mu_1, \ldots, \mu_m}$.
Likewise, the trade-off function for the strong federated privacy is $G_{\sqrt{m-1} \mu_{\max}}$.

\section{Additional Plots}

\subsection{Trade-off function \texorpdfstring{$C_p(f)$}{Cpf}}
\label{sec:func_cpf}
\begin{figure}[H]
    \centering
    \includegraphics[width=0.35\columnwidth]{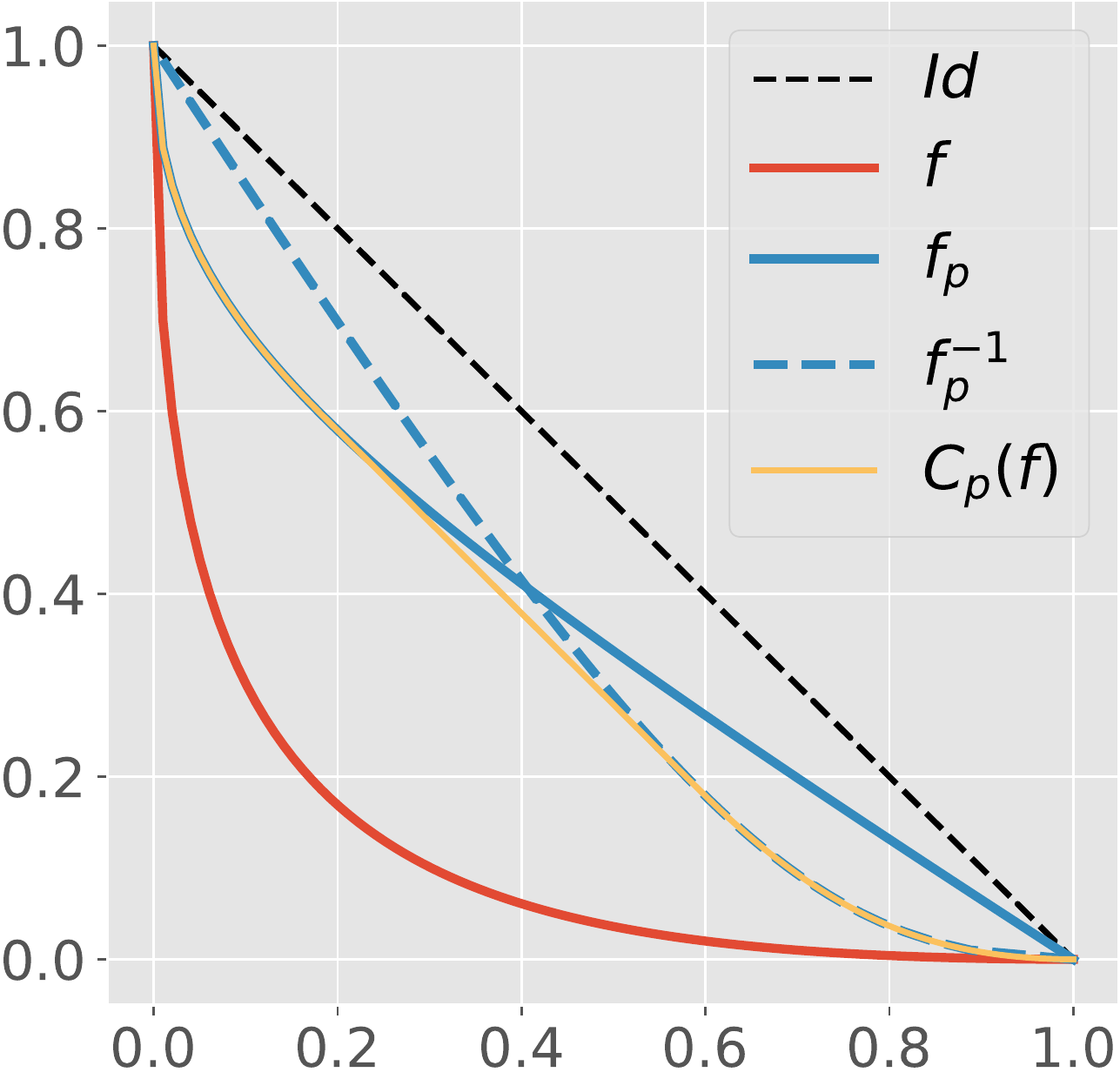}
    \caption{The trade-off function $C_p(f)$ where $f = G_{1.8}$, $p=0.35$.}
    \label{fig:cpf}
\end{figure}
Figure~\ref{fig:cpf} plots an example trade-off function $C_p(f)$ where $f$ is a GDP trade-off function $G_{1.8}$, and the sampling rate
$p=0.35$.

\subsection{Non-IID MNIST}
\begin{figure}[H]
    \centering
    \includegraphics[width=0.45\columnwidth]{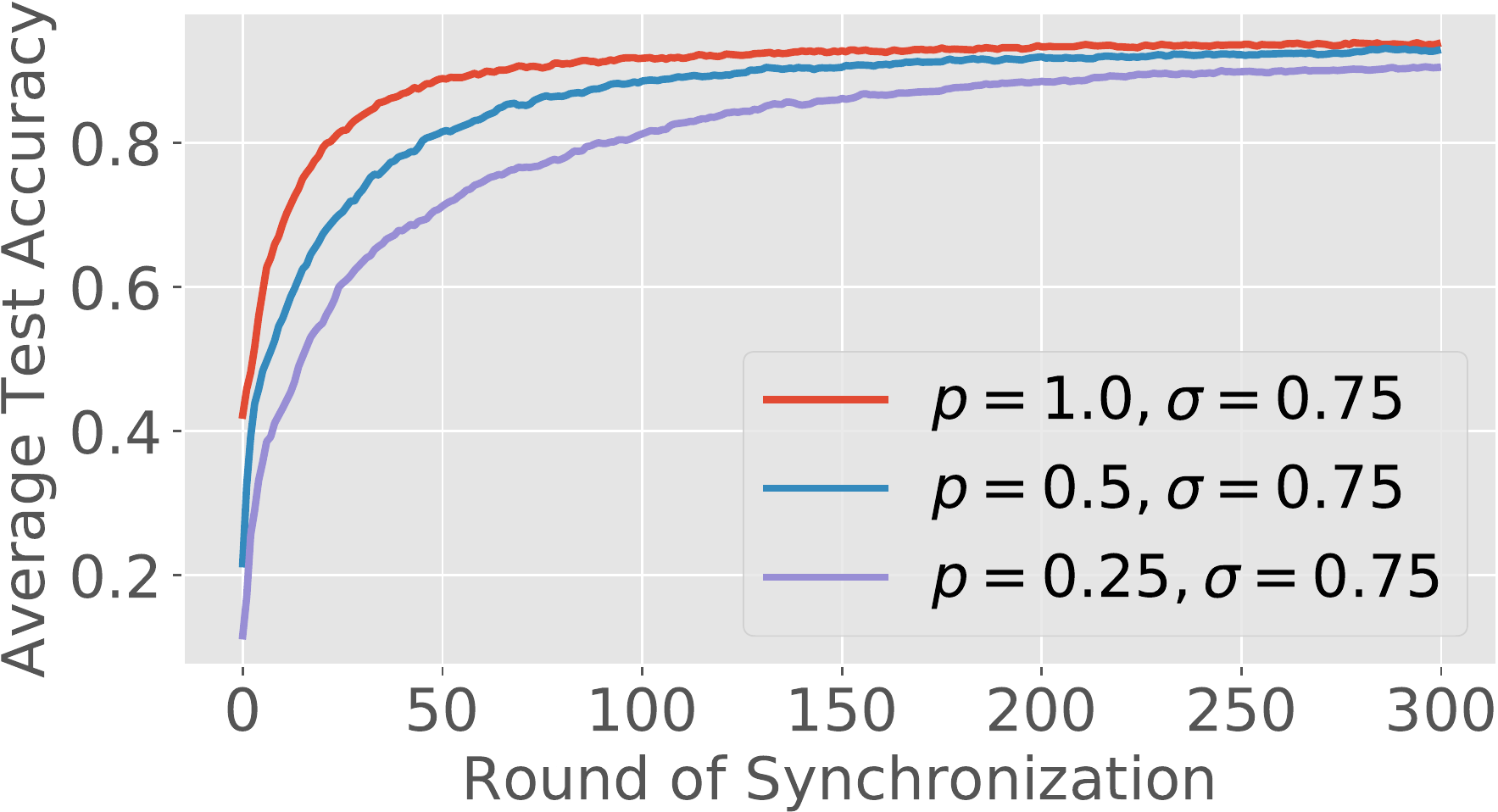}
    \caption{
    MNIST experiment: A larger sampling rate leads to faster convergence.}
    \label{fig:mnist_sampling_rate}
\end{figure}
Figure~\ref{fig:mnist_sampling_rate} plots the average test accuracy versus the number of synchronization rounds
for 3 runs with different client sampling rates in the MNIST epxeriment. It shows that the convergence is faster if we use a larger sampling rate.
The noise level is set to $\sigma=0.75$.

\subsection{Non-IID CIFAR}
%
%
\begin{figure}[H]
     \centering
     \begin{subfigure}{.4\textwidth}
         \centering
         \includegraphics[width=\textwidth]{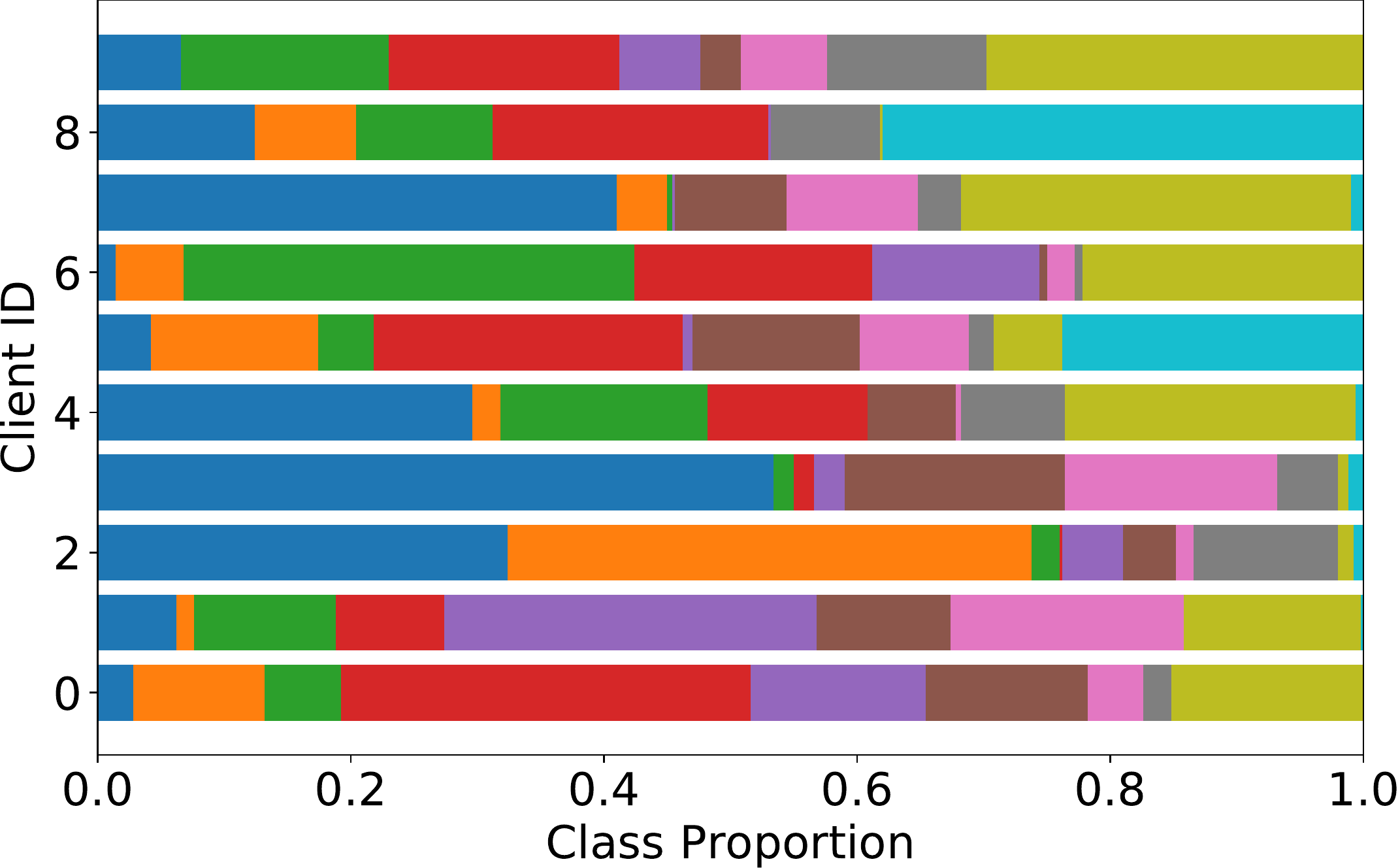}
         \caption{}
         \label{fig:cifar_class_dict}
     \end{subfigure}
     \hfill
     \begin{subfigure}{.45\textwidth}
         \centering
         \includegraphics[width=\textwidth]{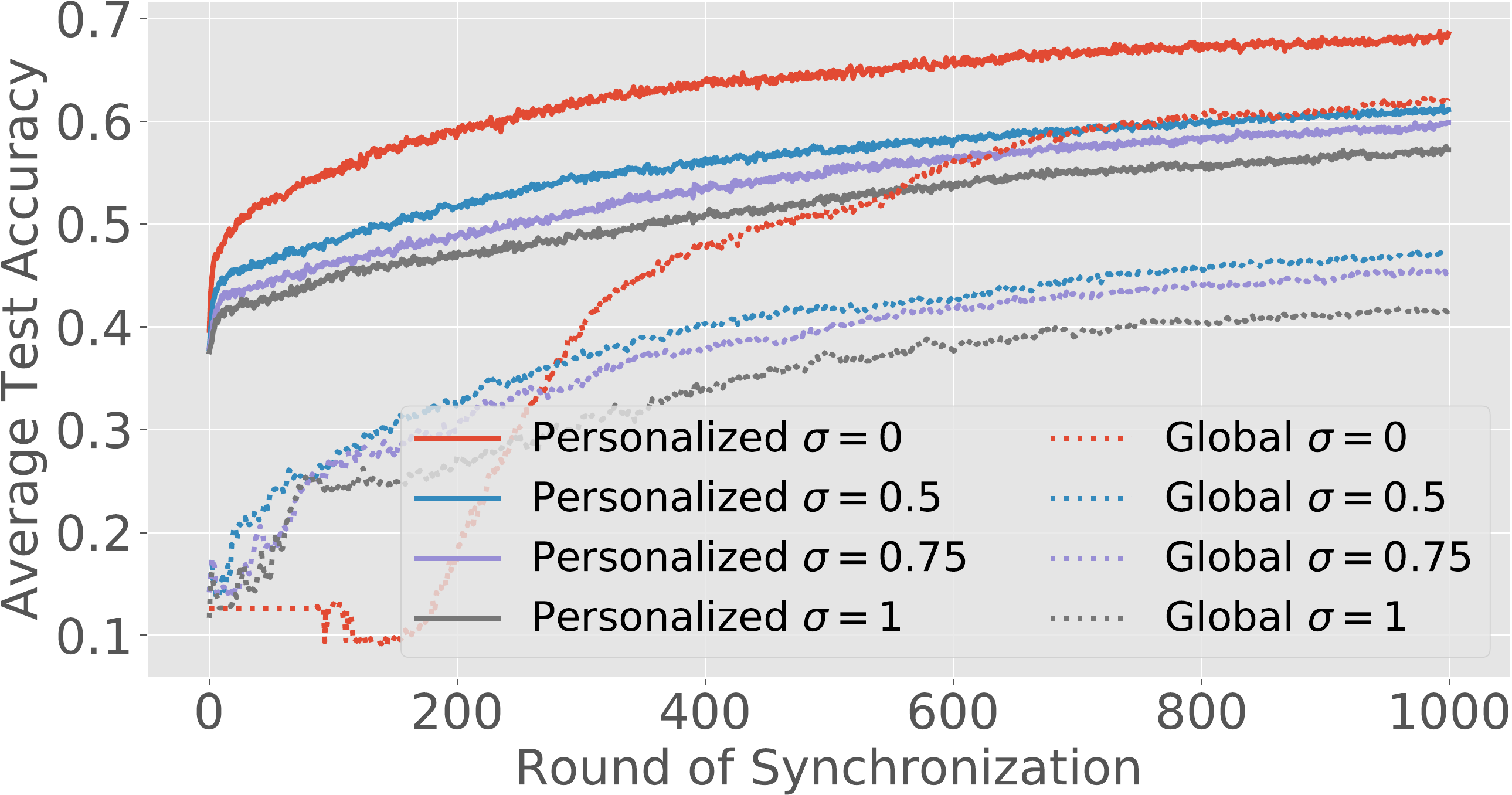}
         \caption{}
         \label{fig:cifar_convergence_p1}
     \end{subfigure}
     \caption{(a) The label class proportion for 10 randomly selected clients in the CIFAR-10 experiments. We use the Dirichlet prior with $\beta=0.5$. (b) Average top-1 test  accuracy vs synchronization rounds for the CIFAR-10 experiments. The client sampling rate is $p=1$.}
\end{figure}

To illustrate the the heterogeneity of client data distributions, Figure~\ref{fig:cifar_class_dict} plots the class proportion of the local data sets for 10 randomly selected clients. Figure~\ref{fig:cifar_convergence_p1} plots the test accuracy curve for CIFAR-10 experiment when the client sampling rate is $p=1$.

\end{document}